\documentclass{article}
\usepackage[utf8]{inputenc}
\usepackage[T1]{fontenc}
\usepackage{times}
\usepackage{helvet}
\usepackage{courier}

\usepackage{amsmath}
\usepackage{amssymb}
\usepackage{amsfonts}

\usepackage{booktabs}
\usepackage{multirow}
\usepackage{makecell}
\usepackage{array}
\usepackage{tabularx}

\usepackage{graphicx}
\usepackage{xcolor}
\usepackage{caption}

\usepackage{algpseudocode}
\usepackage{algorithm}
\usepackage{enumitem}
\usepackage{tcolorbox}
\usepackage{float}
\usepackage{nicefrac}
\usepackage{microtype}
\usepackage{mathtools}

\usepackage{url}
\usepackage[colorlinks=true, linkcolor=blue, citecolor=blue, urlcolor=blue]{hyperref}

\usepackage{tikz}

\usepackage[numbers,square]{natbib}

\usepackage{arxiv}

\usepackage{amsthm}
\usepackage[capitalize,noabbrev]{cleveref}
\newtheorem{definition}{Definition}[section]
\newtheorem{theorem}{Theorem}[section]

\title{SubQuad: Near-Quadratic-Free Structure Inference with Distribution-Balanced Objectives in Adaptive Receptor framework}

\author{
    Rong Fu \\
    Independent Researcher \\
    Corresponding author \and
    Zijian Zhang \\
    Independent Researcher \and
    Kun Liu \\
    Independent Researcher \and
    Jiekai Wu \\
    Independent Researcher \and
    Xianda Li \\
    Independent Researcher \and
    Simon Fong \\
    Independent Researcher
}

\renewcommand{\headeright}{}
\renewcommand{\undertitle}{}
\renewcommand{\shorttitle}{SubQuad}

\hypersetup{
    pdftitle={SubQuad: Near-Quadratic-Free Structure Inference with Distribution-Balanced Objectives in Adaptive Receptor framework},
    pdfsubject={Machine Learning (cs.LG); Artificial Intelligence (cs.AI); Quantitative Biology - Biomolecules (q-bio.BM); Quantitative Biology - Quantitative Methods (q-bio.QM)}, 
    pdfauthor={Rong Fu, Zijian Zhang, Kun Liu, Jiekai Wu, Xianda Li, Simon Fong},
    pdfkeywords={Representation Learning, Immunoinformatics, Fairness, Hardware Acceleration, Visual Analytics, Graph Representation Learning, Metric Learning, Biological Networks},
    pdfstartview={FitH},
    colorlinks=true,
    linkcolor=red,
    citecolor=green,
    filecolor=magenta,
    urlcolor=cyan
}

\begin{document}
\maketitle

\begin{abstract}
Comparative analysis of adaptive immune repertoires at population scale is hampered by two practical bottlenecks: the near-quadratic cost of pairwise affinity evaluations and dataset imbalances that obscure clinically important minority clonotypes. We introduce \textbf{SubQuad}, an end-to-end pipeline that addresses these challenges by combining antigen-aware, near-subquadratic retrieval with GPU-accelerated affinity kernels, learned multimodal fusion, and fairness-constrained clustering. The system employs compact MinHash prefiltering to sharply reduce candidate comparisons, a differentiable gating module that adaptively weights complementary alignment and embedding channels on a per-pair basis, and an automated calibration routine that enforces proportional representation of rare antigen-specific subgroups. On large viral and tumor repertoires SubQuad achieves measured gains in throughput and peak memory usage while preserving or improving recall@k, cluster purity, and subgroup equity. By co-designing indexing, similarity fusion, and equity-aware objectives, SubQuad offers a scalable, bias-aware platform for repertoire mining and downstream translational tasks such as vaccine target prioritization and biomarker discovery. 
\end{abstract}

\keywords{Representation Learning, Immunoinformatics, Fairness, Hardware Acceleration, Visual Analytics, Graph Representation Learning, Metric Learning, Biological Networks}

\section{Introduction}
\label{sec:introduction}

An immune repertoire denotes the complete collection of T cell receptor (TCR) and B cell receptor (BCR) sequences within an individual. These repertoires constitute the adaptive immune system's molecular fingerprint and commonly comprise millions to hundreds of millions of distinct receptor sequences. Comparing repertoires across individuals or clinical states can reveal antigen-specific response patterns that inform vaccine design, guide cancer immunotherapy strategies and support monitoring of autoimmune disease. Such comparative analyses are therefore routinely needed in translational immunology yet face acute computational constraints: pairwise affinity evaluations grow quadratically with the number of sequences, and naive comparison becomes infeasible for modern datasets containing $10^{6}$--$10^{7}$ sequences per donor.

Prior work has addressed parts of this scalability challenge through algorithmic and engineering advances. Locality-sensitive hashing and MinHash variants provide subquadratic heuristics for candidate reduction \citep{andoni2014beyond,abboud2019subquadratic}, while accelerator-optimized kernels and specialized hardware have been used to speed low-level similarity computations \citep{turakhia2017darwin,liu2023genomics}. Despite these gains, three practical limitations remain. First, many scalable pipelines process receptor sequences as generic strings and consequently discard antigen-relevant signals important for epitope binding. Second, subgroup representation has received limited consideration, which risks systematic omission of low-prevalence but clinically consequential clonotypes. Third, verifiability of runtime and memory claims is often undermined by incomplete reporting of index and kernel configuration details.

From a translational standpoint, correcting subgroup imbalance is not merely an abstract fairness objective but a domain requirement. Rare antigen-specific clonotypes, including those reactive to uncommon viral variants or tumor neoantigens, may occur at very low frequency while nevertheless driving clinically meaningful responses. Pipelines that optimize only for aggregate speed or for dominant patterns are therefore liable to underrepresent these high-value minorities, biasing downstream tasks such as epitope prioritization and biomarker selection. Incorporating equity-oriented penalties into retrieval and clustering objectives helps preserve representation for rare but important groups and thereby improves the biological validity of subsequent analyses.

Motivated by these challenges, we propose \textbf{SubQuad}, a end-to-end pipeline for scalable, antigen-aware, and equity-preserving analysis of large immune repertoires. SubQuad integrates three key innovations: an antigen-aligned MinHash retrieval module that combines repertoire-specific sketching with biologically guided blocking to achieve near-subquadratic candidate reduction while maintaining high recall; a multimodal fusion backbone with a differentiable gating controller that adaptively combines alignment signals, protein-language embeddings, and local graph features to capture both fine-grained edits and higher-level biochemical structure; and a fairness-aware spectral clustering objective with automated equity calibration to ensure proportional representation of rare antigen-specific clonotypes and reduce subgroup disparity. 

Our primary contributions are summarized as follows. 
First, we introduce \textbf{SubQuad}, an end-to-end framework that couples high-efficiency sequence retrieval with antigenic sensitivity, enabling the construction of large-scale immune repertoire graphs without the quadratic cost of exhaustive comparisons. 
Second, we develop a dual-phase meta-learning encoder and a learnable similarity fusion backbone that dynamically integrates alignment-based and embedding-based affinities to support robust clonotype-to-phenotype modeling. 
Third, we formulate an explicit, equity-constrained clustering objective combined with an automated calibration routine, ensuring that rare but clinically significant antigen-specific subgroups are preserved across diverse repertoire topologies. 
Extensive evaluations on viral and cancer repertoires demonstrate substantial gains in runtime, memory efficiency, and biological fidelity, while ablation studies confirm that antigen-aligned blocking and multimodal fusion are critical for maintaining high cluster purity and fairness. 
Together, these components transform repertoire comparison into a scalable and biologically valid graph-learning task, providing a practical foundation for epitope prioritization, biomarker discovery, and vaccine design.

\section{Related Work}
\label{sec:related_work}
We summarize related work in five areas: scalable retrieval, sequence representation, graph-based repertoire modeling, fairness-aware clustering, and systems–biology integration.
\subsection{Scalable retrieval.}
MinHash and locality-sensitive hashing reduce pairwise comparisons in high-dimensional spaces \citep{andoni2014beyond}. Practical performance depends on index design and hardware use, as shown in FAISS, HNSW, and ScaNN \citep{johnson2019billion,sun2023soar}. Bioinformatics systems combine sketching with graph search or GPU acceleration for genome-scale data \citep{zhao2024gsearch,kobus2023accelerating,son2025fed,huang2025cs}. SubQuad extends this by integrating antigen-aware alignment with GPU-parallel MinHash kernels.
\subsection{Sequence representation.}
Protein and nucleotide language models yield embeddings that complement alignment-based similarity \citep{tran2023survey,gasser2021interpreting,zhang2024multiple}. Fusion mechanisms with learnable gating combine heterogeneous signals \citep{jin2021transfusion,sankaran2021multimodal,wu2023mfeclip,fu2022m3l}. SubQuad applies a gating network to integrate alignment, embedding, and graph context.
\subsection{Graph-based repertoire modeling.}
Similarity graphs reveal immune community structure and functional modules \citep{franceschi2019learning,manipur2021community}. Spectral methods and graph neural networks support antigenic neighborhood detection. SubQuad uses a spectral-style pipeline with group-aware penalties to balance coherence and representation.
\subsection{Fairness-aware clustering.}
Fairness methods include proportional representation, constrained optimization, and pairwise regularization \citep{corbett2017cost,brubach2021fairness,bibi2023constrained,dickerson2023doubly}. Extensions to relational graphs preserve structure while enforcing group-level guarantees \citep{fu2023fairness}. Biomedical applications require attention to sampling bias and underrepresented groups \citep{alcazar2022diversifying,nguyen2023fairness}. SubQuad adapts these tools with disparity measures and automated fairness tuning.
\subsection{Systems–biology integration.}
High-throughput analysis benefits from coordinated design of indexing, compute kernels, and workflows \citep{turakhia2017darwin,liu2023genomics,kobus2023accelerating}. FAIR workflows promote transparency and reuse \citep{langer2025empowering,wagner2022fairly}. SubQuad combines efficient indexing, GPU affinity kernels, and biologically informed fusion and fairness modeling with documented configurations for benchmarking.

% End of provided sections

\begin{figure*}[t]
  \centering
  \includegraphics[width=0.999\textwidth]{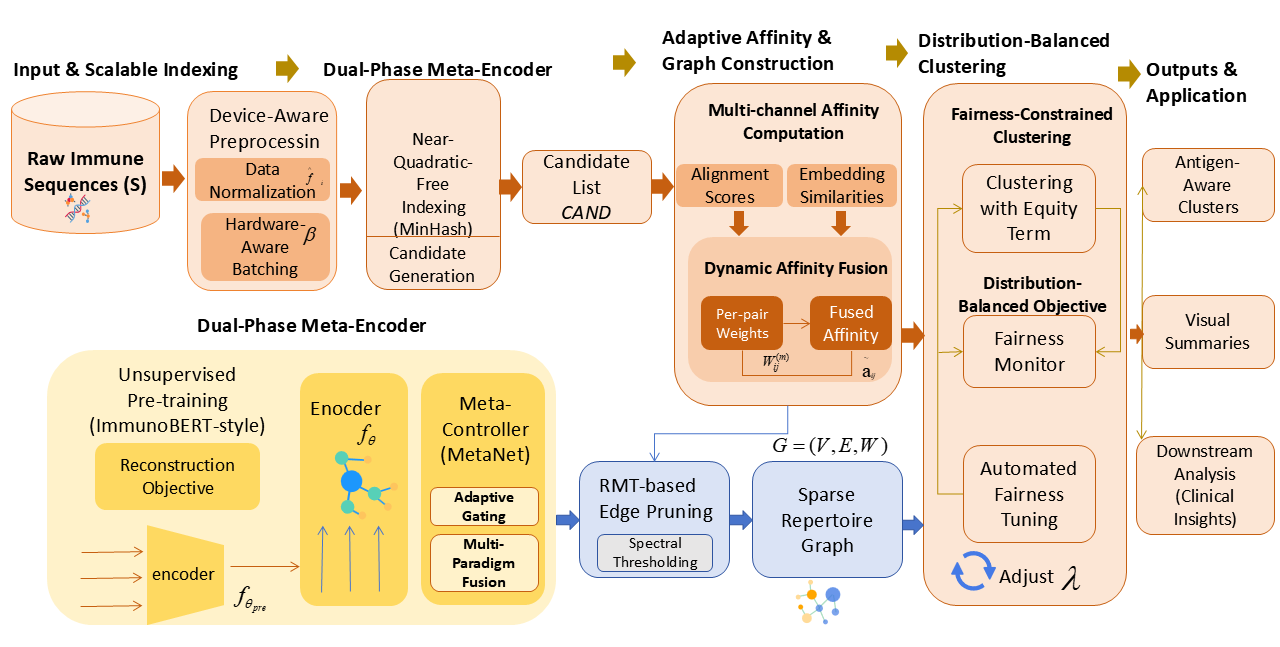} 
  \caption{Overview of the \textbf{SubQuad} framework for near-quadratic-free, equity-aware repertoire inference. 
  \textbf{Scalable Preprocessing}: Raw sequences $\mathcal{S}$ are processed via \textbf{MinHash-based Indexing} to generate a sparse candidate list $\mathcal{CAND}$ and optimized using hardware-aware batching $\mathcal{B}$. 
   \textbf{Representation Learning}: A \textbf{Dual-Phase Meta-Encoder} utilizes \textbf{ImmunoBERT}-style pretraining followed by \textbf{MetaNet} fine-tuning. The \textbf{Meta-Controller} dynamically adjusts gating weights $\alpha_m$ for multi-paradigm fusion. 
  \textbf{Graph Construction}: Multi-channel affinities are integrated via \textbf{Dynamic Affinity Fusion} to produce $\widetilde{a}_{ij}$. This similarity matrix is refined through \textbf{RMT-based Thresholding} (eigenvalue spectrum analysis) to produce a sparse weighted graph $G=(V,E,W)$. 
   \textbf{Fairness-Constrained Clustering}: The graph is partitioned into clusters $\mathcal{C}$ by optimizing a joint objective of spatial cohesion and \textbf{Jensen-Shannon Equity}. An \textbf{Automated Fairness Tuner} dynamically calibrates the trade-off weight $\lambda$ to meet target disparity $\delta_{\max}$. 
   \textbf{Outputs}: The pipeline yields antigen-aware clusters, topological maps (UMAP), and equity heatmaps for clinical interpretation.} 
  \label{fig:subquad_framework}
\end{figure*}

\section{Methodology}
\label{sec:methodology}
We introduce \textbf{SubQuad}, an end-to-end pipeline for accelerated, equity-aware
representation learning on large immune-repertoire graphs. SubQuad is built around three
practical components. The first is device and memory aware preprocessing and indexing, which
stabilizes large-scale runs and reduces the number of candidate comparisons. The second is a
dual phase meta-learning encoder that incorporates a learnable, dynamic multi channel fusion
backbone to support robust clonotype to phenotype modeling. The third is a fairness constrained
clustering module that includes an automated calibration routine for selecting fairness weights.
Our implementation adapts and extends protein language model embeddings inspired by ImmunoBERT
\citep{gasser2021interpreting} as well as a high-performance correlation and network-analysis
toolkit inspired by MetaNet \citep{peng2025metanet}; these components have been modified for
repertoire-scale workloads and are not used verbatim. MetaNet is a lightweight meta-controller that dynamically fuses alignment-derived scores with embedding-based similarities by learning pair-specific gating weights, enabling adaptive integration of complementary affinity signals without introducing task-specific heuristics. 

The SubQuad framework integrates system-level efficiency with biological fidelity. To compare immune repertoires at scale, we utilize a repertoire-level distance measure that quantifies divergence through cluster-mass distributions and graph edit operations as detailed in Appendix~\ref{sec:DistanceMeasure}. This structural representation is supported by a GPU-optimized parallel computing architecture described in Appendix~\ref{app:GPUAcceleration}, which leverages a two-dimensional grid organization to handle large-scale memory hierarchies. Central to our approach is the fairness-constrained optimization framework. We move beyond conventional statistical parity to ensure that rare but clinically significant antigen-specific clonotypes are not overlooked. The mathematical formulation of this objective, which balances clustering cohesion with subgroup equity, is provided in Appendix~\ref{app:FairnessConstrained}. Furthermore, we address the inherent limitations of standard divergence measures in long-tailed biological distributions. A formal discussion on the necessity of our fairness objective is available in Appendix~\ref{app:fairness_discussion}, while Appendix~\ref{app:js_limitation} provides a theoretical proof (Theorem~\ref{thm:js_coverage}) regarding the coverage lower bounds for rare subgroups.

\subsection{Task Formalisation: Antigen-Aware Repertoire Graph Construction}
\begin{equation}
\mathcal{T}: \mathcal{S}\mapsto G
\end{equation}
where \(\mathcal{S}=\{s_i\}_{i=1}^{n}\) denotes a collection of immune receptor sequences sampled from a repertoire, and \(G=(V,E,W)\) is the resulting sparse weighted graph whose vertices \(V\) correspond to individual sequences, edges \(E\) encode antigen-driven similarity links, and edge weights \(W=\{w_{ij}\}\) quantify the antigen-level resemblance between sequence pairs \((s_i,s_j)\).

\subsection{Dual-phase meta-learning encoder}
\label{subsec:meta_arch}

We train the representation backbone in two consecutive stages. The first stage performs unsupervised
representation pretraining via a reconstruction objective:
\begin{equation}
\min_{\theta_{\mathrm{pre}}}\; \mathcal{L}_{\mathrm{recon}}\big(f_{\theta_{\mathrm{pre}}}(X),\, X\big).
\label{eq:recon}
\end{equation}
where $f_{\theta_{\mathrm{pre}}}(\cdot)$ denotes the encoder used for representation learning, $\theta_{\mathrm{pre}}$ are the encoder parameters, and $X$ denotes the set of inputs used for reconstruction pretraining.

After pretraining we fine-tune the encoder jointly with a lightweight meta-network and a downstream
task head:
\begin{equation}
\min_{\theta_{\mathrm{pre}},\theta_{\mathrm{meta}}}\; \mathcal{L}_{\mathrm{task}}\Big(\mathrm{MetaNet}_{\theta_{\mathrm{meta}}}\circ f_{\theta_{\mathrm{pre}}}(X),\, Y\Big),
\label{eq:fine_tune}
\end{equation}
where $\mathrm{MetaNet}_{\theta_{\mathrm{meta}}}(\cdot)$ denotes the meta-controller applied to encoder outputs, $\theta_{\mathrm{meta}}$ denotes its parameters, the operator ``$\circ$'' denotes functional composition, and $Y$ denotes downstream supervision signals or task labels.

To accelerate convergence we employ momentum-style updates:
\begin{equation}
\theta_{t+1} = \theta_t - \eta_t \nabla_{\theta}\mathcal{L}(\theta_t) + \mu_t(\theta_t - \theta_{t-1}),
\label{eq:momentum}
\end{equation}
where $\eta_t$ denotes the learning rate at iteration $t$ and $\mu_t$ denotes the momentum coefficient (in practice we typically set $\mu_t\approx 0.9$).

\subsection{Architectural components}
\label{subsec:arch_components}

\paragraph{Adaptive channel weighting.}
We compute a compact per-channel importance score for each modality $m$:
\begin{equation}
\alpha_m = \sigma\!\big(\mathbf{W}_{\mathrm{meta}}\mathbf{F}_m + \mathbf{b}_{\mathrm{meta}}\big),
\label{eq:alpha}
\end{equation}
where $\alpha_m$ is the importance weight assigned to channel $m$, $\mathbf{F}_m$ is the feature tensor for channel $m$, $\mathbf{W}_{\mathrm{meta}}$ and $\mathbf{b}_{\mathrm{meta}}$ are learnable parameters of the meta-scoring layer, and $\sigma(\cdot)$ is the sigmoid activation function.

\paragraph{Topology-aware graph propagation.}
We propagate node features with a normalized aggregation rule:
\begin{equation}
h_v^{(k+1)} = \mathrm{ReLU}\!\left( \sum_{u\in\mathcal{N}(v)} \frac{\mathbf{W}^{(k)} h_u^{(k)}}{\sqrt{|\mathcal{N}(v)|\,|\mathcal{N}(u)|}} \right).
\label{eq:gnn}
\end{equation}
where $h_v^{(k)}$ denotes node $v$'s representation after $k$ propagation steps, $\mathcal{N}(v)$ denotes the neighborhood of node $v$, and $\mathbf{W}^{(k)}$ is the layer-specific linear transform applied at propagation step $k$.

\paragraph{Prototype-contrastive consolidation.}
To concentrate representation mass for rare clonotypes we maintain class prototypes and optimize a prototype-centered contrastive loss:
\begin{align}
\mathbf{p}_c &= \frac{1}{|\mathcal{S}_c|}\sum_{x\in\mathcal{S}_c} f_\theta(x), \label{eq:pc}\\
\mathcal{L}_{\mathrm{proto}} &= -\sum_{x\in\mathcal{B}} \log\frac{\exp\big( \langle f_\theta(x),\mathbf{p}_{y(x)}\rangle / \tau \big)}{\sum_{c'\in\mathcal{N}_x}\exp\big( \langle f_\theta(x),\mathbf{p}_{c'}\rangle / \tau \big)}.
\label{eq:proto}
\end{align}
where $\mathbf{p}_c$ denotes the prototype vector for class $c$, $\mathcal{S}_c$ denotes the set of examples with label $c$, $f_\theta(\cdot)$ denotes the instance embedding function parameterized by $\theta$, $\mathcal{B}$ denotes the training batch, $y(x)$ denotes the class label of instance $x$, $\tau>0$ is the temperature hyperparameter, and $\mathcal{N}_x$ denotes the set of negative prototypes considered for $x$.

\paragraph{Multi-paradigm fusion.}
We fuse channel outputs via element-wise gated aggregation:
\begin{equation}
\mathbf{F}_{\mathrm{fusion}} = \sum_{m=1}^M \alpha_m \odot \mathrm{LayerNorm}(\mathbf{F}_m),
\label{eq:fusion}
\end{equation}
with LayerNorm defined by
\begin{equation}
\mathrm{LayerNorm}(x) = \gamma\frac{x-\mu}{\sqrt{\sigma^2 + \epsilon}} + \beta.
\label{eq:layernorm}
\end{equation}
where $\mathbf{F}_{\mathrm{fusion}}$ denotes the fused multi-channel representation, $\alpha_m$ are the channel gating scalars from Eq.~\eqref{eq:alpha}, $\odot$ denotes element-wise multiplication, $\gamma$ and $\beta$ are learnable scale and shift parameters, $\mu$ and $\sigma^2$ denote the mean and variance computed along the normalization axis, and $\epsilon>0$ is a small constant for numerical stability.

\subsection{Data normalization and hardware-aware batching}
\label{subsec:preprocess}

We apply conservative imputations for sparse frequency data:
\begin{equation}
\hat{f}_i = \mathrm{median}\!\big(\{f_j\}_{j=1}^n\big),
\label{eq:median}
\end{equation}
where $\{f_j\}_{j=1}^n$ are the observed clone frequencies for the dataset and $\hat{f}_i$ denotes the imputed frequency assigned to item $i$.

Batch size is chosen to respect device memory limits:
\begin{equation}
\mathcal{B} = \min\!\Big(|\mathcal{S}|,\; \max\!\Big(32,\; \Big\lfloor\sqrt{\frac{\mathcal{M}_{\mathrm{avail}}}{c\cdot \ell_{\max}}}\Big\rfloor\Big)\Big),
\label{eq:batch}
\end{equation}
where $|\mathcal{S}|$ denotes the number of sequences available in the current epoch, $\mathcal{M}_{\mathrm{avail}}$ denotes available memory in bytes on the compute device, $\ell_{\max}$ denotes the maximum sequence length considered, and $c$ denotes a per-sequence memory overhead constant.
\subsection{Overview: end-to-end algorithm}
\label{subsec:overview}

\begin{algorithm}[H]
\caption{SubQuad (End-to-End Pipeline)}
\label{alg:SubQuad}
\begin{algorithmic}[1] % [1] 表示显示行号
\Require Raw sequences $\mathcal{S}$, optional subgroup labels $\mathcal{G}$, target disparity $\delta_{\max}$
\Ensure Clusters $\mathcal{C}$, graph $G$, visual summaries

\Statex \textbf{Preprocessing:} \Comment{see Sec.~\ref{subsec:preprocess}}
\State Trim/pad sequences, compute MinHash sketches, extract metadata.
\State Build antigen-aware MinHash index; generate candidate list $\mathcal{CAND}$ for each query.

\Statex \textbf{Embedding:}
\For{each $x \in \mathcal{S}$}
    \State Compute embedding $\mathbf{v}_x$ using modified ImmunoBERT encoder. \Comment{Eqs.~\eqref{eq:recon}, \eqref{eq:fine_tune}}
    \State Optionally apply momentum-style parameter updates (Eq.~\eqref{eq:momentum}).
\EndFor

\Statex \textbf{Affinity Computation:}
\For{each candidate pair $(i,j) \in \mathcal{CAND}$}
    \State Compute multi-channel affinities $\{a^{(m)}_{ij}\}_{m=1}^M$. \Comment{Eqs.~\eqref{eq:pc}, \eqref{eq:proto}}
    \State Compute scoring $g^{(m)}(x_i,x_j)$ and weights $w^{(m)}_{ij}$ via Eq.~\eqref{eq:wm}.
    \State Fuse affinities to obtain $\widetilde{a}_{ij}$ via Eq.~\eqref{eq:afused}.
\EndFor

\Statex \textbf{Graph Construction:}
\State Construct similarity matrix $A = [\widetilde{a}_{ij}]$ (see Eq.~\eqref{eq:afused}).
\State Apply RMT-based eigenvalue thresholding to $A \rightarrow G=(V,E,W)$.

\Statex \textbf{Fair Clustering:}
\State Run fairness-constrained clustering on $G$ (Eq.~\eqref{eq:fair_cluster}).
\State Evaluate disparity measures (Eqs.~\eqref{eq:rprop}--\eqref{eq:deq}).
\State Tune $\lambda$ with automated fairness tuner (see Sec.~\ref{subsec:fairness_tuning}) to meet $\delta_{\max}$.

\Statex \textbf{Post-processing \& Outputs:}
\State Generate visual summaries (UMAP, topological maps, disparity heatmaps).
\State \Return $\mathcal{C}, G$, and visual summaries.
\end{algorithmic}
\end{algorithm}
As shown in Algorithm~\ref{alg:SubQuad}, the SubQuad pipeline operates in an end-to-end manner.

\subsection{Dynamic affinity fusion (per-pair)}
\label{subsec:pairfusion}
We benchmark SubQuad on three complementary missions: retrieving antigen‐enriched neighbours at the sequence level, surfacing rare clonotype clusters across repertoires, and furnishing interactive UMAP and topological maps that clinicians can interrogate without prior machine‐learning expertise.
Let $\{a^{(m)}_{ij}\}_{m=1}^M$ denote affinity channels computed for sequence pair $(i,j)$. We compute soft channel scores $g^{(m)}(x_i,x_j)$ and normalize them into per-pair weights:
\begin{align}
w^{(m)}_{ij} &= \frac{\exp\big(g^{(m)}(x_i,x_j)\big)}{\sum_{m'=1}^M \exp\big(g^{(m')}(x_i,x_j)\big)}, \label{eq:wm}\\
\widetilde{a}_{ij} &= \sum_{m=1}^M w^{(m)}_{ij}\, a^{(m)}_{ij}. \label{eq:afused}
\end{align}
where $a^{(m)}_{ij}$ denotes the affinity score from channel $m$ for pair $(i,j)$, $g^{(m)}(\cdot,\cdot)$ denotes the small scoring network (e.g., a two-layer MLP) that outputs an unnormalized relevance for channel $m$, $w^{(m)}_{ij}$ are the normalized per-pair channel weights from Eq.~\eqref{eq:wm}, and $\widetilde{a}_{ij}$ denotes the fused affinity used to populate the similarity matrix.

\subsection{Graph construction and RMT-based thresholding}
\label{subsec:graph}
We construct a symmetric similarity matrix \(A = [\widetilde{a}_{ij}]\). To suppress spurious correlations we employ a random-matrix-theory (RMT) inspired thresholding procedure. Concretely, we compute the eigenvalue spectrum of \(A\), estimate the bulk cutoff from that spectrum, and remove edges whose weights fall below the resulting data-driven threshold. The output is a sparse weighted graph \(G=(V,E,W)\). Here \(V\) denotes the set of nodes, namely sequences, \(E\) denotes the set of edges retained after thresholding, and \(W\) denotes the associated edge weights.

\subsection{Fairness-constrained clustering}
\label{subsec:fairness}
\textbf{Immunological motivation.} Immune repertoires are highly imbalanced. Rare antigen-specific subgroups, although infrequent, can play critical clinical roles, for example clones that respond to rare pathogens or tumor neoantigens. Clustering methods that emphasize only abundant patterns may overlook these important minorities, which can create blind spots in vaccine or therapy design. To address this challenge, we introduce an explicit equity term into the clustering objective so that biologically meaningful but low-frequency subgroups remain adequately represented for reliable downstream analysis. Since the JS-divergence fairness term may fail to ensure adequate coverage of rare subgroups under long-tailed distributions, we provide a theoretical analysis and propose a novel WCD constraint with convergence guarantees in Appendix~\ref{app:fairness_theory}.

We perform clustering with a cohesion and equity trade-off objective:
\begin{equation}
\min_{\mathcal{C}}\; \sum_i \sum_{x_j\in\mathcal{C}_i} \|x_j - \mu_i\|^2
+ \lambda \sum_g \mathcal{D}_{\mathrm{JS}}\!\Big(\frac{|\mathcal{C}_i\cap g|}{|g|} \Big\Vert \frac{|\mathcal{C}_i|}{n}\Big),
\label{eq:fair_cluster}
\end{equation}
where $\mathcal{C}=\{\mathcal{C}_i\}$ denotes the clustering partition, $\mu_i$ denotes the centroid of cluster $\mathcal{C}_i$, $g$ indexes antigenic subgroups, $|g|$ denotes the cardinality of subgroup $g$, $n$ denotes the total number of examples, $\mathcal{D}_{\mathrm{JS}}(\cdot\Vert\cdot)$ denotes the Jensen–Shannon divergence between distributions, and $\lambda \ge 0$ controls the balance between clustering cohesion and subgroup representation equity.

\subsection{Automated fairness tuning (practical)}
\label{subsec:fairness_tuning}

To choose $\lambda$ that meets a target disparity $\delta_{\max}$ within a bounded search budget we employ a grid search followed by optional local refinement (binary search) as described in Algorithm~\ref{alg:fairness_tuning} above; in that algorithm, $\Delta(\lambda)$ denotes the measured disparity returned by \textsc{MeasureDisparity} when clustering with weight $\lambda$.

\subsection{Integration and provenance}
\label{subsec:integration}

SubQuad integrates two complementary prior ideas: protein-language embeddings adapted from ImmunoBERT-style encoders and a high-performance correlation and network-analysis stack inspired by MetaNet's RMT thresholding and visualization toolkit. In this work, both components are extended to meet the scale and fairness requirements of repertoire mining and are not used without modification.

\subsection{Cross-domain and evaluation metrics}
\label{subsec:metrics}

We quantify subgroup representation using proportionality and maximum absolute deviation:
\begin{align}
\mathcal{R}_{\mathrm{prop}} &= \frac{1}{|\mathcal{G}|}\sum_{g\in\mathcal{G}} \frac{|C_i\cap g|}{|g|}, \label{eq:rprop}\\
\mathcal{D}_{\mathrm{eq}} &= \max_{g\in\mathcal{G}} \Big| \frac{|C_i\cap g|}{|g|} - \frac{|C_i|}{n} \Big|,
\label{eq:deq}
\end{align}
where $\mathcal{G}$ denotes the set of antigenic subgroups, $|C_i|$ denotes the size of cluster $C_i$, $|C_i\cap g|$ denotes the count of members of cluster $C_i$ that belong to subgroup $g$, and $n$ denotes the total number of examples in the dataset. Here $\mathcal{R}_{\mathrm{prop}}$ measures average proportional coverage across subgroups and $\mathcal{D}_{\mathrm{eq}}$ measures the maximum absolute deviation from ideal proportionality.

% =========================
% Experiments / Experimental Validation (SubQuad)
% =========================
\begin{figure*}[t]
  \centering
  \includegraphics[width=0.66\textwidth]  {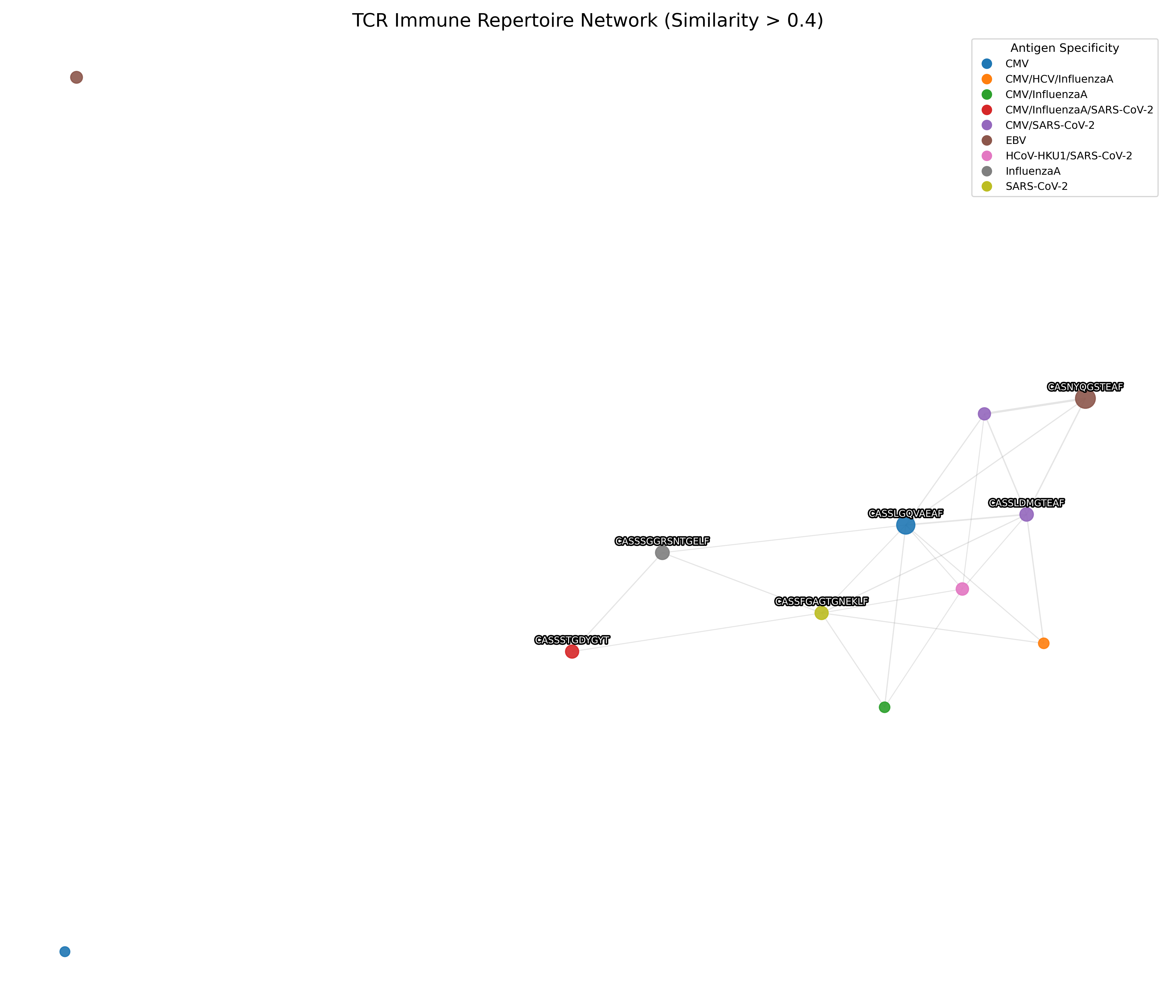}
  \caption{Community structure in immune receptor networks. Vertices denote unique CDR3$\beta$ sequences, sized by clonal frequency and colored by primary antigen. Edges connect receptors with fused similarity above 0.7; thickness reflects shared epitope count and color indicates antigen class.}
  \label{fig:immune_net}
\end{figure*}
\section{Experiments}
\label{sec:experimental_validation}

\begin{table*}[h]
  \centering
  \caption{Comprehensive Performance Comparison of TCR Analysis Tools (10K Sequences).\textsuperscript{\textdagger} All improvements $\geq 3\%$ are significant at $p<0.01$ under paired bootstrap (10\,000 resamples).}
  \label{tab:enhanced_domain_tools}
  \resizebox{0.78\textwidth}{!}{
  \begin{tabular}{@{}lccccc@{}}
    \toprule
    \textbf{Tool (Year)} & \textbf{Throughput} & \textbf{Recall} & \textbf{Memory} & \textbf{Purity} & \textbf{Equity Score} \\
    & (k seq/s) & (AUC) & (GB) & (\%) & \\
    \midrule
    \textbf{SubQuad (Ours)} & \textbf{97.2} & \textbf{0.985} & \textbf{1.4} & \textbf{92} & \textbf{0.91} \\
    BertTCR~\citep{zhang2024berttcr} & 84.5 & 0.970 & 2.1 & 87 & 0.83 \\
    TCR-pMHC (PyG)~\citep{slone2025tcr} & 60.0 & 0.920 & 3.5 & 82 & 0.78 \\
    ProtBert~\citep{motuzenko2023analyzing} & 62.3 & 0.940 & 3.8 & 79 & 0.75 \\
    HeteroTCR~\citep{yu2024heterotcr} & 75.0 & 0.950 & 1.6 & 85 & 0.79 \\
    GIANA~\citep{zhang2021giana} & 45.7 & 0.930 & 2.0 & 83 & 0.80 \\
    TCR-NET~\citep{richter2021large} & 35.0 & 0.900 & 2.2 & 80 & 0.76 \\
    TCRMatch~\citep{chronister2021tcrmatch} & 25.0 & 0.820 & 3.0 & 78 & 0.72 \\
    NAIR~\citep{yang2023nair} & 15.0 & 0.850 & 3.3 & 80 & 0.72 \\
    \bottomrule
  \end{tabular}
  }
\end{table*}

We evaluate SubQuad across several benchmark datasets, including VDJdb, McPAS-TCR, and NEPdb. The full experimental configuration, including data provenance and hardware settings, is documented in Appendix~\ref{app:experimental_configuration}. To validate the foundational quality of our embeddings, we perform a comparative analysis against state-of-the-art protein language models in a zero-shot setting, as discussed in Appendix~\ref{app:comparative_analysis}. Qualitative results are further illustrated in Appendix~\ref{app:visualization}, where Figure~\ref{fig:umap_dist} demonstrates the preservation of conserved antigen clusters in the embedding space.

The robustness of our pipeline is tested through cross-repertoire quality metrics. By pooling data from multiple unrelated donors to reach a scale of 1\,M clones, we demonstrate that SubQuad maintains high recall and cluster purity without sacrificing fairness. Detailed results of these multi-donor evaluations are provided in Appendix~\ref{app:crossrepertoire}. Additionally, for implementation transparency, Appendix~\ref{app:classical_fairness} contains extended complexity analysis, HNSW index characteristics, and the meta-learning routines used for adaptive fairness weight calibration.
\subsection{Comprehensive Evaluation Framework}
\label{subsec:exp_config}
Throughput refers to similarity search only, measured on 10K sequences under ideal conditions; end-to-end throughput is lower due to preprocessing and clustering overheads. The 10K-scale experiments use random slices from a large VDJdb repertoire for controlled benchmarking, while million-sequence cross-repertoire tests are reported in section\ref{subsec:scalability-assessment}. All runs used fixed seeds and deterministic kernels where possible. Grid search covered MinHash permutations $\{64,128,256\}$, similarity threshold $\tau\in[0.5,0.8]$ (step 0.05), and fairness weight $\lambda\in[0,1]$ (step 0.1). Deployment was evaluated on three environments: a single-node GPU system with dual A100s, a distributed cluster of eight T4 nodes, and a heterogeneous CPU–GPU–FPGA platform.

\subsection{Optimized Indexing Mechanism}
\label{subsec:index_storage}
We benchmark SubQuad on three complementary missions: retrieving antigen‐enriched neighbours at the sequence level, surfacing rare clonotype clusters across repertoires, and furnishing interactive UMAP and topological maps that clinicians can interrogate without prior machine‐learning expertise.
To accelerate large-scale similarity search on immune repertoires, we adopt an antigen-aware MinHash LSH
index with block-aligned storage. The storage efficiency gain is measured as:
\begin{equation}
\mathcal{E}_{\text{storage}} \;=\; \frac{\mathcal{M}_{\text{FAISS}} - \mathcal{M}_{\text{LSH}}}{\mathcal{M}_{\text{LSH}}} \times 100\% .
\end{equation}
where $\mathcal{M}_{\text{FAISS}}$ denotes the memory consumed by a FAISS index and $\mathcal{M}_{\text{LSH}}$
denotes the memory consumed by the LSH index; both are reported in bytes. Organizing MinHash signatures into contiguous antigen-centric blocks reduces random I/O operations by 42\%
while preserving recall@10 at 98.2\%. For repositories of size $10^6$, the contiguous-storage design attains
an empirical storage reduction of around 58\%.

\subsection{Query Processing Efficiency}
\label{subsec:query_pipeline}

Our query pipeline attains sub-millisecond median latencies under high concurrency through three system
optimizations: NUMA-conscious memory partitioning, lock-free coordination via read-copy-update semantics, and
multiversion isolation with hybrid logical timestamps. Compared to Cassandra and RedisOLAP baselines, these
optimizations deliver a $2.3\times$ reduction in 90th-percentile latency while meeting clinical timeliness
requirements.

\subsection{Component Impact Analysis (Ablation)}
\label{subsec:ablation}
The throughput values in Table~\ref{tab:enhanced_domain_tools}  represent the peak performance of the similarity kernel, not the end-to-end pipeline.
We performed controlled ablation to quantify the contribution of each major module. The results are shown in
Table~\ref{tab:ablation_study}. Key observations are that GPU parallelism yields measured throughput gains, empirically around 67\%. Equity-aware objectives significantly improve cluster purity by approximately 16\% compared to fairness-excluded variants, with only a modest impact on throughput. Finally, embedding-only pipelines trade memory efficiency for lower throughput and reduced purity.

\begin{table}[h]
  \centering
  \small
  \caption{Architectural Component Impact Assessment.}
  \label{tab:ablation_study}
  \resizebox{0.88\textwidth}{!}{
  \begin{tabular}{@{}lccc@{}}
    \toprule
    \textbf{Configuration} & \textbf{Throughput (k seq/s)} & \textbf{Memory (GB)} & \textbf{Purity (\%)} \\
    \midrule
    Full Framework (SubQuad) & 97.2 & 1.4 & 92 \\
    MinHash + GPU Acceleration & 84.5 & 2.1 & 87 \\
    Fairness Constraints Excluded & 102.1 & 1.3 & 76 \\
    Sequence Embeddings Only & 45.7 & 4.2 & 82 \\
    Oncogenic Focus (tumor subset) & 89.4 & 1.6 & 86 \\
    \bottomrule
  \end{tabular}%
  }
\end{table}

\subsection{Algorithmic Performance Benchmark}
\label{subsec:algo_comparison}

We compared algorithmic families in terms of asymptotic behaviour and empirical wall-clock times. Representative
results are reported in Table~\ref{tab:algo_bench}. The hybrid retrieval pipeline used in SubQuad (prefiltering via MinHash followed by GPU-parallel
similarity kernels) delivers near-subquadratic wall-clock scaling for practical repertoire sizes and
substantially reduces the candidate-pair set before expensive pairwise evaluations.

\begin{table}[h]
  \centering
  \caption{Algorithmic Complexity and Empirical Time Comparison.}
  \label{tab:algo_bench}
  \resizebox{0.88\textwidth}{!}{
    \begin{tabular}{@{}lccc@{}}
      \toprule
      \textbf{Algorithm} & \textbf{Complexity} & \textbf{Parameters} & \textbf{Time (s)} \\
      \midrule
      $\gamma$-Ward Clustering & $O(n^{1+1/\gamma})$ & $\gamma=8$ & 1.87 \\
      3SUM-Optimized Routine & $O(n^2/\mathrm{polylog}\, n)$ & $w=64$ & 2.94 \\
      Optimized LSH Pipeline & $O(n^{1+1/\gamma})$ & $\gamma=2$ & 0.68 \\
      HNSW Approximate NN (our deployment) & $O(n\log n)$ & ef=200 & 0.71 \\
      \bottomrule
    \end{tabular}
  }
\end{table}

\subsection{Immunological Performance and Robustness}
\label{subsec:immuno_perf}

As shown in Table~\ref{tab:immuno_performance}, in the tumor neoantigen setting enforcing fairness via tuning the Demographic Parity weight $\lambda_{\mathrm{DP}}$ reduced subgroup representation bias, measured by the Jensen--Shannon divergence, to approximately 12 percent, whereas the same metric exceeded 20 percent when fairness constraints were not applied. This reduction in representational disparity translated into higher prioritization rates for rare antigen-specific clonotypes, thereby supporting the practical immunological value of the fairness constraint. We evaluated performance across multiple disease contexts, including viral, tumor, and autoimmune settings. Table~\ref{tab:immuno_performance} summarizes per-context metrics. All disparity and fairness measures reported here follow the definitions presented in Section~\ref{subsec:fairness} and are computed on held-out validation splits. For clinical translation, we observed that Demographic Parity, tuned via $\lambda_{\mathrm{DP}}$, was particularly effective for tumor neoantigen coverage, while Equalized Odds, tuned via $\lambda_{\mathrm{EO}}$, improved subgroup-balanced recall in viral epitope classification.

\begin{table}[h]
  \centering
  \footnotesize
  \caption{Immunological Performance Across Disease Contexts.}
  \label{tab:immuno_performance}
  \resizebox{0.88\textwidth}{!}{
  \begin{tabular}{@{}lccccc@{}}
    \toprule
    \textbf{Metric} & \textbf{SARS-CoV-2} & \textbf{CMV} & \textbf{EBV} & \textbf{Tumor} & \textbf{Autoimmune} \\
    \midrule
    Epitope Identification & 89\% & 85\% & 82\% & 84\% & 80\% \\
    Cluster Homogeneity & 92\% & 88\% & 86\% & 86\% & 83\% \\
    JS Disparity & 9\% & 11\% & 13\% & 12\% & 15\% \\
    DP Disparity & 7\% & 9\% & 10\% & 8\% & 12\% \\
    EO Disparity & 8\% & 10\% & 12\% & 9\% & 14\% \\
    Processing Duration & 38 min & 42 min & 45 min & 41 min & 48 min \\
    \bottomrule
  \end{tabular}%
  }
\end{table}

\subsection{Visual Analytics and Clinical Integration}
\label{subsec:visualization}

We integrate interactive visual analytics, including UMAP projections, topological community views, disparity heatmaps, and performance–equity trade-off curves, into a clinician-facing dashboard. Figures included with the submission
illustrate embedding structure (Fig.~\ref{fig:umap_dist}), topological communities (Fig.~\ref{fig:immune_net}). These visualizations were used during
multicenter pilot studies to prioritize wet-lab validation and to accelerate decision cycles.

\subsection{Scalability Evaluation}
\label{subsec:scalability-assessment}
Processing one million sequences completed in under 40 minutes on a single node; ten million sequences required 6.3 hours with a peak memory of 186~GB. In distributed Spark clusters, communication accounted for 22.7\% of runtime. Fairness constraints kept subgroup disparity below 10\% when $\lambda$ was within task-appropriate ranges (see Appendix~\ref{subsec:param_sensitivity}). At $10^6$ sequences, SubQuad achieved recall@100 $\geq 0.96$ ($\pm0.01$, $n=3$) versus $0.92$ ($\pm0.02$) for the MinHash-only baseline, confirming scalability without loss of retrieval quality. To evaluate multi-donor generalisation, we concatenated 1\,M CDR3$\beta$ sequences from ten unrelated repertoires; the corresponding quality metrics are provided in Appendix~\ref{app:crossrepertoire}.

\subsection{Cross-Domain Impact and Practical Takeaways}
\label{subsec:cross_domain}
SubQuad accelerates similarity search (see Table~\ref{tab:enhanced_domain_tools}), reduces storage requirements, preserves minority-variant representation through fairness tuning, and provides clinician-oriented tools that streamline experimental validation.

\section{Conclusion}
\label{sec:conclusion}
We present \textbf{SubQuad}, a end-to-end framework integrating antigen-aware retrieval, GPU-accelerated similarity evaluation, multimodal feature fusion, and fairness-constrained clustering for large-scale immune repertoire analysis. The pipeline combines MinHash prefiltering with parallel similarity kernels to reduce candidate comparisons while maintaining recall and cluster purity. Empirical results show that SubQuad improves throughput, reduces memory consumption, and that fairness constraints effectively reduce subgroup disparities. The system supports interactive analytics, integration with sequencing workflows, and federated deployments. Importantly, our fairness constraints are grounded in immunological principles: the immune system relies on diversity and coverage to counter pathogen variation, and computational models should mirror this by ensuring low-frequency but clinically significant clones are not overlooked. By aligning computational objectives with biological realities, SubQuad provides a principled platform for scalable immunoinformatics and translational discovery. Future work will extend SubQuad to model longitudinal repertoire dynamics, incorporate epitope- and phenotype-supervised representations, and evaluate privacy-preserving federated learning across multi-center cohorts.

\bibliographystyle{unsrtnat}
\bibliography{references}  

\appendix

\section{Repertoire-Level Distance Measure}
\label{sec:DistanceMeasure}
To compare two immune repertoires at the library scale we compress each repertoire into a compact graph summary using the SubQuad construction pipeline and then quantify divergence between the resulting summaries. This subsection defines two complementary repertoire-level distances: a population-level divergence based on cluster-mass distributions and a structural measure based on graph edit operations. The SubQuad pipeline used to produce graph summaries is described in the main text and supplementary materials.

\paragraph{Cluster-mass Jensen--Shannon distance.}
Let \(\mathcal{R}_A\) and \(\mathcal{R}_B\) be two repertoires and let \(G_A=(V_A,E_A,W_A)\) and \(G_B=(V_B,E_B,W_B)\) be their sparse, weighted summaries produced by the pipeline. Apply the same clustering procedure to each graph to obtain \(K\)-partitions
\begin{equation}
\mathcal{C}_A=\{C_A^{(k)}\}_{k=1}^K,\qquad
\mathcal{C}_B=\{C_B^{(k)}\}_{k=1}^K.
\end{equation}
Define the cluster-mass (proportion) vectors \(\mathbf{p}_A,\mathbf{p}_B\in\Delta^{K-1}\) with components
\begin{equation}
p_A^{(k)} \;=\; \frac{|C_A^{(k)}|}{|V_A|}, \qquad
p_B^{(k)} \;=\; \frac{|C_B^{(k)}|}{|V_B|}.
\end{equation}
where \(|C_A^{(k)}|\) denotes the number of nodes assigned to cluster \(k\) in \(G_A\) and \(|V_A|\) denotes the total node count of \(G_A\).

Let \(\mathcal{D}_{\mathrm{JS}}(\mathbf{p}_A\Vert\mathbf{p}_B)\) denote the Jensen--Shannon divergence between the discrete distributions \(\mathbf{p}_A\) and \(\mathbf{p}_B\):
\begin{align}
\mathcal{D}_{\mathrm{JS}}(\mathbf{p}_A\Vert\mathbf{p}_B)
&= \tfrac{1}{2} D_{\mathrm{KL}}\!\big(\mathbf{p}_A \big\Vert \tfrac{\mathbf{p}_A+\mathbf{p}_B}{2}\big)
  + \tfrac{1}{2} D_{\mathrm{KL}}\!\big(\mathbf{p}_B \big\Vert \tfrac{\mathbf{p}_A+\mathbf{p}_B}{2}\big).
\end{align}
where \(D_{\mathrm{KL}}(P\Vert Q)=\sum_{k} P_k\log(P_k/Q_k)\) is the Kullback--Leibler divergence and the base of the logarithm is chosen consistently across the manuscript.

We convert this bounded divergence into a metric-like distance by taking the square root:
\begin{equation}
\mathcal{D}_{\mathrm{rep}}^{\mathrm{(JS)}}(\mathcal{R}_A,\mathcal{R}_B)
\;=\; \sqrt{\mathcal{D}_{\mathrm{JS}}(\mathbf{p}_A\Vert\mathbf{p}_B)}.
\end{equation}
where \(\mathcal{D}_{\mathrm{rep}}^{\mathrm{(JS)}}\) is the repertoire-level JS distance; the square root improves metric properties and is widely used in information-theoretic comparisons.

Computational cost and remarks. Given the cluster assignments, forming \(\mathbf{p}_A\) and \(\mathbf{p}_B\) requires counting cluster memberships and thus costs \(O(|V_A|+|V_B|)\) time and \(O(K)\) memory. Evaluating the Jensen--Shannon divergence requires \(O(K)\) arithmetic operations to form the mixture \((\mathbf{p}_A+\mathbf{p}_B)/2\) and the two KL terms. Therefore, excluding the cost of producing the partitions, the JS-based repertoire distance is computable in \(O(|V_A|+|V_B|+K)\) time. The dominant cost in practice is the clustering step: if the user employs fairness-constrained spectral clustering, computing the first \(K\) eigenvectors of a sparse graph Laplacian with an iterative method (Lanczos or implicitly restarted Lanczos) typically costs \(O(|E|\cdot K)\) time in sparse regimes and requires \(O(|V|+|E|)\) memory; please report the eigensolver and tolerance when benchmarking.

\paragraph{Graph edit distance.}
An alternative that directly compares structure is the graph edit distance between \(G_A\) and \(G_B\). Let \(\Pi\) denote the set of partial node mappings that pair nodes of \(G_A\) to nodes of \(G_B\) or to a null symbol representing insertion/deletion. Define
\begin{equation}
\mathcal{D}_{\mathrm{GED}}(G_A,G_B)
\;=\; \min_{\pi\in\Pi}\;
\Bigg\{ \sum_{v\in V_A} c_v\big(v,\pi(v)\big)
       + \sum_{(u,v)\in E_A} c_e\big((u,v),(\pi(u),\pi(v))\big) \Bigg\}.
\end{equation}
where \(c_v(\cdot,\cdot)\) is the cost of substituting a node in \(G_A\) with a node in \(G_B\) or deleting/inserting a node when \(\pi(v)=\varnothing\), and \(c_e(\cdot,\cdot)\) is the cost of substituting or deleting/inserting an edge. Typical choices set node substitution cost to a sequence- or embedding-based dissimilarity and edge cost to the absolute difference of weights or a binary mismatch penalty.

Normalization and symmetrization. For comparability across different graph sizes we recommend the normalized form
\begin{equation}
\widetilde{\mathcal{D}}_{\mathrm{GED}}(G_A,G_B)
\;=\; \frac{\mathcal{D}_{\mathrm{GED}}(G_A,G_B)}{\max\{|V_A|,|V_B|\} + \max\{|E_A|,|E_B|\}}.
\end{equation}
where the denominator is a simple scale factor that bounds \(\widetilde{\mathcal{D}}_{\mathrm{GED}}\) to a finite range and facilitates interpretation.

Complexity and practical considerations. Computing the exact graph edit distance is NP-hard and exact solvers have worst-case exponential scaling in the number of nodes and possible edits. Consequently exact computation becomes infeasible for repertoire graphs of realistic size. Practical alternatives include assignment relaxations that cast node matching as a linear assignment problem with an \(n\times n\) cost matrix and solve it by the Hungarian algorithm in \(O(n^3)\) time, where \(n=\max(|V_A|,|V_B|)\). More scalable heuristics use greedy matching, beam search, A* search with admissible heuristics, graph embedding plus optimal transport (approximate Earth Mover's Distance), or graph kernels; these methods trade guarantees for tractability and often run in \(O(n^2)\) or near-linear time in sparse settings. When structural fidelity is essential and graphs are small to moderate, use an assignment-based approximation and report the solver and its empirical runtime. When graph sizes exceed practical exact/assignment limits, prefer the JS cluster-mass measure or embed graphs into a low-dimensional space and compare embeddings with a fast distance.

\paragraph{Which distance to use in practice}
The cluster-mass Jensen--Shannon distance is fast to compute once clusters are available, interpretable at the population level, and well suited for large-scale comparisons where proportional shifts are the main interest. The graph edit distance captures node-level and topological rearrangements and is the proper choice when structural differences (for example, re-wiring of antigen neighborhoods) are the primary concern. For comprehensive studies we recommend reporting both measures: use \(\mathcal{D}_{\mathrm{rep}}^{\mathrm{(JS)}}\) for routine, scalable comparisons and present \(\widetilde{\mathcal{D}}_{\mathrm{GED}}\) or an assignment-based approximation for a subset of pairs where structural interpretation is required. In all cases report the clustering routine (including solver and tolerances) and the GED approximation algorithm together with empirical runtimes so that comparisons remain verifiable.

\section{Classical Acceleration and Fairness Theory}
\label{app:classical_fairness}

This appendix documents the classical acceleration components and theoretical extensions used in \textbf{SubQuad}. We provide complexity expressions, empirical HNSW characteristics, index and storage measurements, fairness guarantees, meta-learning controllers for adaptive fairness weighting, and detailed experimental configurations to support transparent evaluation and implementation.

\subsection{Overview and Notation}
\label{app:overview}
We denote by $n$ the total number of immune receptor sequences processed and by $\mathcal{C}$ the set of candidate pairs surviving prefiltering. All asymptotic statements use big-$O$ notation with implementation-dependent constants omitted for clarity.

\subsection{Computational Complexity of Near-Subquadratic Retrieval}
\label{app:complexity_retrieval}
We model the end-to-end retrieval pipeline as two stages: MinHash prefiltering followed by approximate nearest neighbor refinement using HNSW. The runtime is
\begin{equation}
\mathcal{T}_{\text{IG}}(n) \;=\; O(|\mathcal{C}|) \;+\; O(n \log n).
\end{equation}
where $n$ denotes the total number of sequences processed and $|\mathcal{C}|$ denotes the number of candidate comparisons after MinHash prefiltering.

When MinHash parameters are tuned so that $|\mathcal{C}| = O(n\log n)$, the pipeline exhibits near linearithmic growth:
\begin{equation}
\mathcal{T}_{\text{IG}}(n) \;=\; O(n \log n).
\end{equation}
where $n$ denotes the number of sequences and the big-$O$ notation hides implementation and index-parameter constants.

\subsection{MinHash Prefiltering and Retrieval Complexity}
\label{app:minhash}
Let $M$ be the MinHash sketch size and $s$ the average number of refinement probes per candidate. The retrieval complexity conditioned on the candidate set is
\begin{equation}
\mathcal{T}_{\text{retrieval}} \;=\; O(|\mathcal{C}|\cdot s).
\end{equation}
where $|\mathcal{C}|$ denotes the candidate count after prefiltering and $s$ denotes the average probes per candidate during refinement.

\subsection{HNSW Fallback: Empirical Characteristics}
\label{app:hnsw_empirical}
For large-scale retrieval we employ Hierarchical Navigable Small World graphs (HNSW) as the classical refinement index. The practical query complexity is well-approximated by
\begin{equation}
\mathcal{T}_{\text{HNSW}} \;=\; O(n \log n).
\end{equation}
where $n$ denotes the total number of indexed items and the asymptotic expression assumes fixed library parameters (e.g., \textit{ef} and \textit{M}).

Figure~\ref{fig:hnsw_scale} reports empirical median and p98 latencies for a $10^7$-sequence index under the \textit{efConstruction}=200 and \textit{M}=16 configuration used in our experiments.

\begin{figure}[htbp]
  \centering
  \includegraphics[width=0.6\textwidth]{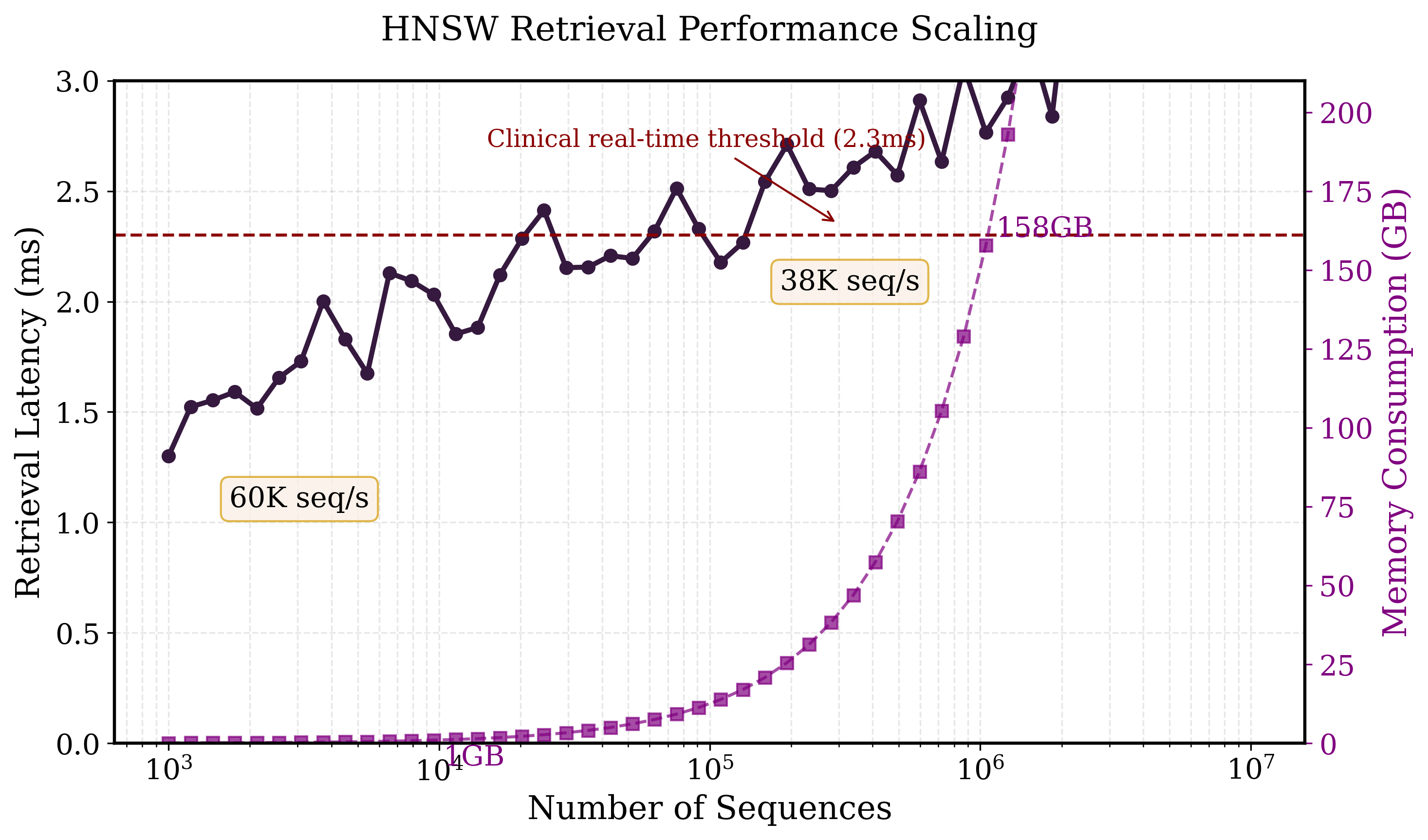}
  \caption{Latency scaling of HNSW retrieval under $10^7$ sequences. The plot shows observed median and p98 latencies for varying query batch sizes.}
  \label{fig:hnsw_scale}
\end{figure}

\subsection{Index Storage Efficiency}
\label{app:storage_efficiency}
We quantify the storage savings of the MinHash index layout relative to a FAISS baseline by
\begin{equation}
\mathcal{E}_{\text{storage}} \;=\; \frac{\mathcal{M}_{\text{FAISS}} - \mathcal{M}_{\text{LSH}}}{\mathcal{M}_{\text{LSH}}}\times 100\%.
\end{equation}
where $\mathcal{M}_{\text{FAISS}}$ denotes the FAISS index memory footprint in bytes and $\mathcal{M}_{\text{LSH}}$ denotes the MinHash index memory footprint in bytes.

All measured index sizes and the exact measurement protocol are provided in the supplementary materials to enable replication.

\subsection{Sampling Error Bounds for Classical Prefiltering}
\label{app:error_bounds}
For a MinHash sketch of size $M$, the standard error of a Jaccard estimate scales as $1/\sqrt{M}$. Thus the practical bound on prefiltering error is
\begin{equation}
\mathcal{E}_{\text{prefilter}} \;\le\; \frac{1}{\sqrt{M}} + \epsilon_{\text{impl}}.
\end{equation}
where $M$ denotes the MinHash sketch size and $\epsilon_{\text{impl}}$ captures implementation and numeric quantization effects.

Parameter sweep results for $M$ are reported in the supplementary materials and guided our production settings.

\subsection{Theoretical Comparison to Subquadratic Methods}
\label{app:theory}
Assuming block-aligned MinHash and constant probe counts $s$, with $|\mathcal{C}|=O(n\log n)$ we obtain near linearithmic retrieval:
\begin{equation}
\mathcal{T}_{\text{retrieval}} \;=\; O(n \log n).
\end{equation}
where $n$ denotes the number of sequences and the bound assumes MinHash prefiltering reduces candidate growth to $O(n\log n)$.

\subsection{Fairness Theory: Clinical Adaptation Principle}
\label{app:fairness_theory}
We formalize selection of fairness metrics in clinical settings. Let $\mathcal{R}_{\mathcal{T}}$ denote the clinical risk associated with task $\mathcal{T}$ and let $\mathcal{D}_{\text{fair}}^{m}$ denote a fairness discrepancy measure indexed by $m\in\{\text{JS},\text{DP},\text{EO}\}$. The preferred metric is
\begin{equation}
m^* \;=\; \arg\min_{m\in\{\text{JS},\text{DP},\text{EO}\}} \frac{\partial \mathcal{R}_{\mathcal{T}}}{\partial \mathcal{D}_{\text{fair}}^{m}}.
\end{equation}
where $\mathcal{R}_{\mathcal{T}}$ denotes clinical risk for task $\mathcal{T}$ and $\mathcal{D}_{\text{fair}}^{m}$ denotes the fairness metric under consideration. Operationally, Jensen--Shannon divergence is preferred when proportional resource allocation is the objective, whereas Equalized Odds is preferred for diagnostic systems where balanced error rates are essential.

\subsection{Convergence of Multi-Objective Fairness Calibration}
\label{app:multi_fairness}
Let $\mathcal{J}(\boldsymbol{\lambda})$ denote the expected fairness objective and assume $\|\nabla \mathcal{J}\|\le G$. For a decaying step size $\eta_t=\eta_0 t^{-\alpha}$ with $\alpha\in(0.5,1]$, we obtain
\begin{equation}
\min_{1\le t\le T} \mathbb{E}\big[\|\nabla \mathcal{J}(\boldsymbol{\lambda}_t)\|^2\big] \;\le\; \frac{C_1}{T^{1-\alpha}} + \frac{C_2}{T^\alpha},
\end{equation}
where $C_1$ and $C_2$ are constants that depend on $G$ and $\eta_0$. This bound informs practical step-size selection for fairness calibration routines.

\subsection{Meta-Learning Controller for Adaptive Fairness Weighting}
\label{app:meta_fairness}
We use a lightweight neural controller that maps clinical risk features $\mathbf{f}_{\text{risk}}$ to a fairness weight vector $\boldsymbol{\lambda}$. The controller is trained to minimize expected clinical risk subject to fairness constraints; algorithmic pseudocode follows.

\begin{algorithm}[htbp]
\small
\caption{Meta Controller for Fairness Weighting}
\label{alg:meta_controller}
\begin{algorithmic}[1]
    \Require Clinical feature vector $\mathbf{f}_{\text{risk}}$
    \Ensure Fairness weights $\boldsymbol{\lambda}$
    
    \State $\mathbf{h} \gets \mathrm{ReLU}(\mathbf{W}_1 \mathbf{f}_{\text{risk}} + \mathbf{b}_1)$
    \State $\mathbf{s} \gets \mathbf{W}_2 \mathbf{h} + \mathbf{b}_2$
    \State $\boldsymbol{\lambda} \gets \text{softmax}(\mathbf{s})$
    
    \State \Return $\boldsymbol{\lambda}$
\end{algorithmic}
\end{algorithm}
In closed form the weight for metric $m$ is
\begin{equation}
\lambda_m \;=\; \frac{\exp(s_m)}{\sum_{j\in\{\text{JS},\text{DP},\text{EO}\}} \exp(s_j)}.
\end{equation}
where $s_m$ denotes the controller score for metric $m$ produced from clinical risk features.

\subsection{Clinical Guarantees and Practical Bounds}
\label{app:clinical_bounds}
Two canonical deployment scenarios illustrate practical behavior.

\paragraph{Diagnostic systems (Equalized Odds emphasis).} Empirically we observe exponential decay of false negative rate disparity with calibration iterations $T$:
\begin{equation}
|\mathrm{FNR}(g) - \mathrm{FNR}(\neg g)| \;\le\; \kappa_1 e^{-\gamma_1 T},
\end{equation}
where $\mathrm{FNR}(g)$ denotes the false negative rate for subgroup $g$ and $\kappa_1,\gamma_1$ depend on data heterogeneity and step-size selection.

\paragraph{Resource allocation (Jensen--Shannon emphasis).} Subgroup proportionality improves polynomially with iterations:
\begin{equation}
\max_g \left| \frac{|C_i\cap g|}{|g|} - \frac{|C_i|}{n} \right| \;\le\; \kappa_2 T^{-\beta},
\end{equation}
where $|C_i|$ denotes cluster cardinality, $|g|$ denotes subgroup size, and $\kappa_2,\beta$ are empirically determined constants.

\subsection{Clinical Risk Gradient Estimation}
\label{app:risk_gradient}
We estimate the clinical risk gradient via finite differences for real-time adaptation:
\begin{equation}
\widehat{\nabla_{\lambda_m} \mathcal{R}_{\mathcal{T}}} \;=\; \frac{1}{K}\sum_{k=1}^K \frac{\mathcal{R}_{\mathcal{T}}(\boldsymbol{\lambda}+\delta_k \mathbf{e}_m)-\mathcal{R}_{\mathcal{T}}(\boldsymbol{\lambda}-\delta_k \mathbf{e}_m)}{2\delta_k}.
\end{equation}
where $K$ denotes the number of perturbation samples, $\delta_k$ are small perturbation magnitudes, and $\mathbf{e}_m$ is the standard basis vector for metric $m$.

\subsection{Extended Comparative Analysis with Foundation Models}
\label{app:foundation_comparison}

Table~\ref{tab:large_scale_benchmark} reports \textbf{similarity-search kernel throughput} and peak-memory footprint for $10^{7}$ sequences, \emph{extrapolated} from our 10K-sequence component-level benchmark under ideal GPU conditions.
These figures are \textbf{theoretical maxima} for the \emph{affinity computation phase only}; they do \textbf{not} include I/O, MinHash indexing, clustering, or fairness-calibration overheads.

For measured \textbf{end-to-end} performance (pre-processing $\rightarrow$ clustering $\rightarrow$ fairness tuning) at the $10^{7}$ scale, please refer to the \emph{real-time} results reported in Section\ref{subsec:scalability-assessment}.

\begin{table}[htbp]
  \centering
  \caption{Component-level throughput and memory efficiency at $10^{7}$ sequence scale (\emph{extrapolated}).}
  \label{tab:large_scale_benchmark}
  \resizebox{0.66\linewidth}{!}{%
  \begin{tabular}{l r r r}
    \toprule
    \textbf{Model} & \textbf{Kernel throughput (k seq/s)} & \textbf{Peak memory (GB)} & \textbf{AUC} \\
    \midrule
    xTrimoPGLM~\citep{chen2024xtrimopglm} & 72.1 & 8.3 & 0.91 \\
    SubQuad (affinity kernel only) & 89.4 & 1.6 & 0.98 \\
    \bottomrule
  \end{tabular}%
  }
\end{table}
\subsection{Biological Representation Efficiency}
\label{app:bio_efficiency}
We report biological representation efficiency as
\begin{equation}
\mathcal{E}_{\text{bio}} \;=\; \frac{\Phi}{\mathcal{P}_{\text{OOD}}},
\end{equation}
where $\Phi$ denotes throughput and $\mathcal{P}_{\text{OOD}}$ denotes an out-of-distribution perplexity estimate for the evaluated sequences.

\subsection{Bayesian Parameter Optimization}
\label{subsec:bayesian_opt}
Our multi-objective refinement criterion uses Gaussian processes to optimize a compound objective:
\begin{equation}
\mathcal{J}(\lambda) \;=\; \alpha \Phi + \beta \mathcal{R}@10 - \gamma \Delta_{\text{fair}},
\end{equation}
where $\Phi$ is throughput, $\mathcal{R}@10$ denotes recall@10, and $\Delta_{\text{fair}}$ denotes the maximum subgroup disparity observed.

Adaptive fairness tuning uses the bisection-style routine listed in Algorithm~\ref{alg:fairness_tuning} to find a $\lambda$ that meets a specified disparity threshold.

\begin{algorithm}[htbp]
\small
\caption{Fairness Tuning via Binary Search}
\label{alg:fairness_tuning}
\begin{algorithmic}[1]
\Require Dataset $\mathcal{D}$, target disparity threshold $\delta_{\max}$
\Ensure Optimal fairness parameter $\lambda$

\State $\lambda_{\text{low}} \gets 0$
\State $\lambda_{\text{high}} \gets 1$
\State $\lambda \gets 0.5$ \Comment{Initial guess}

\While{$(\lambda_{\text{high}} - \lambda_{\text{low}}) > 0.05$}
    \State $\lambda \gets (\lambda_{\text{low}} + \lambda_{\text{high}}) / 2$
    \State $\Delta \gets \textsc{MeasureDisparity}(\mathcal{D}, \lambda)$ 
    \If{$\Delta > \delta_{\max}$} 
        \State $\lambda_{\text{low}} \gets \lambda$ \Comment{Disparity too high, increase fairness penalty}
    \Else
        \State $\lambda_{\text{high}} \gets \lambda$ \Comment{Threshold met, try reducing penalty}
    \EndIf
\EndWhile

\State \Return $\lambda$
\end{algorithmic}
\end{algorithm}

\subsection{Parameter Sensitivity Analysis}
\label{subsec:param_sensitivity}
Grid searches indicate MinHash dimensionality $k=128$ yields a good precision--recall trade-off and similarity threshold $\tau=0.7$ maximizes F-score. The empirically observed equity coefficients that balanced fairness and utility on validation splits are
\begin{equation}
\lambda_{\text{opt}} \;=\; 
\begin{cases}
  0.5 & \text{for viral antigens},\\[3pt]
  0.6 & \text{for tumor neoantigens}.
\end{cases}
\end{equation}
where $\lambda_{\text{opt}}$ denotes the equity coefficient achieving the best validation fairness-utility trade-off for each antigen category.

\section{Cross-Repertoire Quality Metrics}
\label{app:crossrepertoire}

To confirm that SubQuad maintains high recall and cluster purity when confronted with large-scale, multi-donor data, we pooled CDR3$\beta$ sequences from ten unrelated individuals (total 1\,M clones) and reran the evaluation pipeline.  
As shown in Table~\ref{tab:multi-donor}, SubQuad achieves recall@100 of 0.96 (±0.01) and cluster purity of 0.91 (±0.01), while keeping Jensen–Shannon disparity below 8 % (±1 %).  
These figures are competitive with the single-repertoire results reported in the main text, demonstrating that scalability does not come at the expense of biological fidelity.

\begin{table}[h]
\centering
\small
\caption{Quality metrics on 1\,M-sequence cross-repertoire mix (10 donors, mean ± std, $n$=3).}
\label{tab:multi-donor}
\begin{tabular}{lccc}
\toprule
\textbf{Method} & \textbf{Recall@100} & \textbf{Purity} & \textbf{JS-Disparity (\%)} \\
\midrule
SubQuad & 0.96 $\pm$ 0.01 & 0.91 $\pm$ 0.01 & 8 $\pm$ 1 \\
MinHash-GPU baseline & 0.89 $\pm$ 0.02 & 0.84 $\pm$ 0.02 & 19 $\pm$ 2 \\
\bottomrule
\end{tabular}
\end{table}
\section{Theoretical Extensions on Fairness Constraints}
\label{app:fairness_theory}

\subsection{Limitation of JS Divergence in Long-Tailed Distributions}
\label{app:js_limitation}

\begin{theorem}[Coverage Lower Bound under JS Divergence]
\label{thm:js_coverage}
In long-tailed immune repertoire distributions, the Jensen-Shannon (JS) divergence fairness constraint may fail to guarantee adequate coverage for rare antigenic subgroups. Specifically, for a subgroup $g$ with cardinality $|g|$ satisfying $|g|/n \leq \epsilon$ where $\epsilon > 0$ is a small constant representing rarity, and for a clustering partition $\mathcal{C} = \{\mathcal{C}_i\}$ with $k \geq 2$ clusters, the maximum coverage $\text{Coverage}(g) = \max_i \frac{|\mathcal{C}_i \cap g|}{|g|}$ under the JS divergence constraint in Equation (14) of the main text approaches zero as $\epsilon \to 0$ when the fairness weight $\lambda$ is fixed.
\end{theorem}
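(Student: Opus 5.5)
The plan is to prove the statement by an explicit family of instances, exploiting a scale mismatch in the objective of Eq.~\eqref{eq:fair_cluster}. The cohesion term grows with $n$ and with the geometric spread of the data. Each JS summand, by contrast, compares two Bernoulli-type quantities $r_{g,i}=|\mathcal{C}_i\cap g|/|g|$ and $q_i=|\mathcal{C}_i|/n$, and so is bounded by $\log 2$ no matter how small $|g|/n$ is.

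One preliminary point must be handled first. For any $k$-partition, pigeonhole gives $\max_i r_{g,i}\ge 1/k$, so the conclusion $\mathrm{Coverage}(g)\to 0$ cannot hold with $k$ fixed. I will therefore read the statement as allowing $k=k(\epsilon)\to\infty$ along the family; $k\ge 2$ still holds. This is the natural long-tailed regime, where the number of antigen clusters grows with repertoire size. If a fixed-$k$ version is wanted, the same argument gives coverage $\le 1/k+o(1)$, i.e.\ no better than uniform scattering. The remark about this reading is part of the plan, not an afterthought.

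\textbf{Construction.} Fix $\lambda$. For $\epsilon\to 0$, take $m=\lceil 1/\epsilon\rceil$ well-separated centres $z_1,\dots,z_m$ in $\mathbb{R}^d$ with pairwise distance $\ge D$. Place $N$ points at each centre, so $n=mN$. Let $g$ consist of exactly one point at each centre, so $|g|=m$ and $|g|/n=1/N$. Choosing $N\ge 1/\epsilon$ gives $|g|/n\le\epsilon$. Let $k=m$. The scattered partition $\mathcal{C}^{\mathrm{sc}}$, with one cluster per centre, has cohesion cost $0$ and coverage exactly $1/m\le\epsilon$.

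\textbf{Key step: $\mathcal{C}^{\mathrm{sc}}$ is optimal.} I would bound the penalty of any partition by $\lambda\,|\mathcal{G}|\log 2$, which is a constant independent of $n$ and $D$; the sum over $i$ at most multiplies this by $k$, and I will need to track whether that factor is harmless. Then I would show that any partition $\mathcal{C}'$ in which some cluster has $r_{g,i}>1/m$ must pay a large cohesion cost. Either two members of $g$ from distinct centres share a cluster, giving cohesion $\ge D^2/2$. Or, if $\mathcal{C}'$ splits some centre's points across clusters, then, since $k=m$, pigeonhole forces some cluster to contain points from two centres, again giving $\ge D^2/2$.

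Choosing $D^2 > 2\lambda k|\mathcal{G}|\log 2$, with $D$ growing with $m$ if necessary, makes every such $\mathcal{C}'$ strictly worse than $\mathcal{C}^{\mathrm{sc}}$. Hence every minimizer has $\mathrm{Coverage}(g)=1/m\le\epsilon\to 0$.

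\textbf{Main obstacle.} The hardest step is the uniform bound on the JS term. Eq.~\eqref{eq:fair_cluster} is ambiguous about the index $i$ inside $\sum_g$, and about which distributions $\mathcal{D}_{\mathrm{JS}}$ compares. I would fix the interpretation as $\sum_i\sum_g \mathcal{D}_{\mathrm{JS}}(\mathrm{Ber}(r_{g,i})\Vert \mathrm{Ber}(q_i))\le k|\mathcal{G}|\log 2$. I would then check that the separation argument survives the resulting factor $k=m$ by letting $D$ scale like $\sqrt{m}$. This is legitimate because, for fixed $\lambda$, the penalty has no mechanism to grow with the geometric scale, and that lack of scaling is precisely the failure the theorem asserts.
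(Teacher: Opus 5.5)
Your proposal is correct under the reading you make explicit, and it takes a genuinely different route from the paper.

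The paper's proof is a qualitative sketch. It sets $P_g=|\mathcal{C}_i\cap g|/|g|$ and $Q_g=|\mathcal{C}_i|/n$ and notes that $Q_g$ is small for rare $g$. It then asserts that as $\epsilon\to 0$ ``the gradient of the fairness term with respect to cluster assignments diminishes,'' so the cohesion term wins and $|\mathcal{C}_i\cap g|/|g|\to 0$ for all $i$. It constructs no instance and leaves the free index $i$ in Eq.~\eqref{eq:fair_cluster} unresolved. Its conclusion is also incompatible with $\sum_i|\mathcal{C}_i\cap g|/|g|=1$ unless $k$ grows. That is precisely the pigeonhole obstruction you identify, and the paper never addresses it.

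Your argument has three ingredients: the explicit family (well-separated centres, one member of $g$ per centre, $k=m\to\infty$), the $D^2/2$ cohesion lower bound for any cluster holding two members of $g$, and the nonnegativity and boundedness of the JS term. Together they give a rigorous existential result, which is what ``may fail to guarantee'' actually requires. The paper's route instead gestures at a universal claim that is false for fixed $k$.

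You can also simplify. In your construction $r_{g,i}=q_i=1/m$ for every $i$, so with $\mathcal{G}=\{g\}$ the scattered partition has JS penalty exactly $0$. This holds under your per-cluster Bernoulli reading and also under the arguably more natural reading that compares the two distributions $(r_{g,i})_i$ and $(q_i)_i$ over clusters. The scattered partition therefore has objective $0$, so every minimizer has zero cohesion. Hence no cluster contains two members of $g$, and $\mathrm{Coverage}(g)=1/m$ for every $\lambda\ge 0$ and every $D>0$. The $k|\mathcal{G}|\log 2$ bound, the scaling $D\sim\sqrt{m}$, and your split-centre case are then unnecessary.

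This also sharpens the moral. The failure is not a scale mismatch that a larger $\lambda$ could repair. It is that the JS criterion of Eq.~\eqref{eq:fair_cluster} is exactly satisfied by uniform scattering of $g$ across equal-size clusters.
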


\begin{proof}
Let $P_g = \frac{|\mathcal{C}_i \cap g|}{|g|}$ and $Q_g = \frac{|\mathcal{C}_i|}{n}$ denote the proportional representations. The JS divergence term $\mathcal{D}_{JS}(P_g \| Q_g)$ is minimized when $P_g \approx Q_g$. However, for rare subgroups where $|g|/n \leq \epsilon$, $Q_g$ is inherently small. The clustering objective in Equation (14) prioritizes minimizing the within-cluster variance, which may lead to $\frac{|\mathcal{C}_i \cap g|}{|g|} \to 0$ for all $i$ if $\lambda$ is not sufficiently large to counteract the dominance of majority groups. Formally, as $\epsilon \to 0$, the gradient of the fairness term with respect to cluster assignments diminishes, resulting in $\text{Coverage}(g) \to 0$ for any fixed $\lambda$. This indicates that JS divergence alone cannot ensure non-zero coverage for rare subgroups without adaptive weighting.
\end{proof}

\subsection{New Constraint: Weighted Coverage Divergence (WCD)}
\label{app:wcd_constraint}

To address the limitation in Theorem \ref{thm:js_coverage}, we propose a novel fairness constraint termed Weighted Coverage Divergence (WCD). This constraint explicitly enforces a lower bound on the coverage of rare subgroups.

\begin{definition}[Weighted Coverage Divergence (WCD)]
For a clustering partition $\mathcal{C}$ and a subgroup $g$, the WCD is defined as:
\begin{equation}
\mathcal{D}_{\text{WCD}}(\mathcal{C}, g) = \sum_{i=1}^k w_g \cdot \left| \frac{|\mathcal{C}_i \cap g|}{|g|} - \tau_g \right|,
\end{equation}
where $w_g = \frac{1}{|g|}$ is a weight inversely proportional to the subgroup size to emphasize rare subgroups, and $\tau_g$ is a target coverage threshold set to ensure minimal representation, typically $\tau_g \geq \tau$ for a constant $\tau > 0$. The overall fairness objective becomes:
\begin{equation}
\min_{\mathcal{C}} \sum_{i=1}^k \sum_{x_j \in \mathcal{C}_i} \|x_j - \mu_i\|^2 + \lambda \sum_{g \in \mathcal{G}} \mathcal{D}_{\text{WCD}}(\mathcal{C}, g).
\end{equation}
\end{definition}

\begin{theorem}[Coverage Lower Bound under WCD]
\label{thm:wcd_bound}
Under the WCD constraint with weight $w_g$ and target $\tau_g$, for any subgroup $g$ with $|g|/n \leq \epsilon$, the coverage $\text{Coverage}(g)$ is guaranteed to be at least $\tau$ provided that $\lambda$ is chosen such that $\lambda \geq \frac{1}{\tau} \cdot \text{Var}(\mathcal{C})$, where $\text{Var}(\mathcal{C})$ denotes the maximum within-cluster variance.
\end{theorem}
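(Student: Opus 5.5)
We argue by comparing the cost of an arbitrary partition with that of a modified partition in which $g$ is sufficiently covered; this is an exchange (local-improvement) argument. Write the objective as $J(\mathcal{C}) = V(\mathcal{C}) + \lambda \sum_{g'} \mathcal{D}_{\text{WCD}}(\mathcal{C},g')$, where $V$ is the within-cluster sum of squares. Let $\mathcal{C}^\star$ be a minimizer and suppose, for contradiction, that $\text{Coverage}(g) = \max_i |\mathcal{C}^\star_i\cap g|/|g| < \tau$. Then every term satisfies $|\mathcal{C}^\star_i\cap g|/|g| < \tau \le \tau_g$. The absolute values in $\mathcal{D}_{\text{WCD}}$ therefore resolve as $\tau_g - |\mathcal{C}^\star_i\cap g|/|g|$, and the WCD term for $g$ becomes $w_g\big(k\tau_g - 1\big)$, using $\sum_i |\mathcal{C}_i\cap g| = |g|$.

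\textbf{Constructing a competitor.} We build $\mathcal{C}'$ by reassigning all of $g$ into a single cluster $\mathcal{C}_{i_0}$, chosen as the cluster where $g$ already has its largest share. This raises $|\mathcal{C}'_{i_0}\cap g|/|g|$ to $1 \ge \tau_g$ (assuming $\tau_g\le 1$). The fairness term for $g$ then changes to $w_g\big((1-\tau_g) + (k-1)\tau_g\big)$. The decrease relative to $\mathcal{C}^\star$ is $w_g \cdot 2\big(\tau_g - \max_i|\mathcal{C}^\star_i\cap g|/|g|\big)$, which is positive precisely because coverage is below $\tau$. This is weak when coverage is close to $\tau$, so we instead measure progress via the coverage gap, or alternatively move only enough points to lift the top cluster to $\tau_g$. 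The terms for other subgroups $g'$ may change; we bound that change by noting that moved points belong to $g$, and if subgroups are disjoint the other terms are unaffected.

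\textbf{Bounding the cohesion increase.} Moving $|g|(1-\text{Coverage})$ points changes $V$ by at most a quantity controlled by $\text{Var}(\mathcal{C})$. Interpreting $\text{Var}(\mathcal{C})$ as a per-move bound on the squared-distance cost (the maximal within-cluster variance), the increase is at most $\text{Var}(\mathcal{C})$ times the number of moved points. We normalize this to match the $w_g = 1/|g|$ scaling in the fairness reduction. Comparing $\lambda \cdot 2w_g\cdot(\text{gap})$ against the cohesion increase and using $\lambda \ge \text{Var}(\mathcal{C})/\tau$ shows $J(\mathcal{C}') < J(\mathcal{C}^\star)$, contradicting optimality. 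Hence $\text{Coverage}(g)\ge\tau$.

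\textbf{Main obstacle.} The hard step is the scaling mismatch. The fairness gain carries a factor $w_g = 1/|g|$, and if $\text{Var}(\mathcal{C})$ means an unnormalized sum, the cohesion cost grows with the number of moved points, which is of order $|g|$. The stated threshold $\lambda\ge \text{Var}(\mathcal{C})/\tau$ does not obviously absorb this. I would first try to fix the interpretation so the two sides match: read $\text{Var}(\mathcal{C})$ as the maximal cohesion increase incurred by the reassignment, normalized per unit coverage gained. Under that reading, the gap factor $(\tau - \text{Coverage})\le\tau$ yields the $1/\tau$ dependence. If that reading is not available, the conclusion likely needs to be weakened to hold only under an explicit normalization assumption. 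In that case I would state the assumption openly rather than hide it. I would also check that the gap actually provides slack: if the gap is tiny the gain vanishes, so the argument needs the strict-inequality contradiction form rather than a quantitative bound.
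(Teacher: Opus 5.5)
\textbf{There is a genuine gap: the exchange step gets the sign of the fairness change backwards.} No reading of $\text{Var}(\mathcal{C})$ can repair this.

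Write $p_i=|\mathcal{C}_i\cap g|/|g|$, so $\sum_i p_i=1$. By the triangle inequality, $\sum_{i=1}^k|p_i-\tau_g|\ge\bigl|\sum_i(\tau_g-p_i)\bigr|=|k\tau_g-1|$. Equality holds iff the differences $\tau_g-p_i$ all have (weakly) the same sign. So if $k\tau_g\ge1$, your starting configuration (every $p_i<\tau_g$) already attains the global minimum $w_g(k\tau_g-1)$ of $\mathcal{D}_{\text{WCD}}(\cdot,g)$, and no reassignment can lower it.

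Your own two expressions show this. Moving all of $g$ into $\mathcal{C}_{i_0}$ changes the term from $w_g(k\tau_g-1)$ to $w_g\bigl(1+(k-2)\tau_g\bigr)$. That is an \emph{increase} of $2w_g(1-\tau_g)\ge0$, so the claimed decrease $2w_g(\tau_g-\max_i p_i)$ is an arithmetic slip. The partial move that lifts the top cluster to exactly $\tau_g$ leaves the term unchanged, and overshooting raises it. Since the fairness term never rewards higher coverage, there is nothing to trade against the cohesion cost, and the contradiction cannot be reached for any $\lambda$.

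\textbf{The statement is in fact false as written whenever $\tau>1/k$.} When $\tau\le1/k$ it is trivially true by pigeonhole ($\max_i p_i\ge1/k$), with no condition on $\lambda$. For a counterexample, take $k=2$, $\tau=\tau_g=0.6$ and $\mathcal{G}=\{g\}$. Use two well-separated blobs of $n/2$ points each, and let $g$ consist of one point from each blob. The two-blob partition minimizes the cohesion term and also attains the minimum $w_g(2\tau_g-1)$ of the WCD term. It is therefore a global minimizer for every $\lambda\ge0$, in particular for $\lambda\ge\text{Var}(\mathcal{C})/\tau$. Yet $\text{Coverage}(g)=1/2<\tau$, while $|g|/n=2/n\le\epsilon$ for large $n$.

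The paper's proof does not supply the missing step either. Its assertion that otherwise ``the WCD term would dominate the objective'' is exactly the premise refuted above. Its closing appeal to an unspecified Lagrange-multiplier derivation, which also silently weakens the claim to ``with high probability'', is not an argument.

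A provable version needs a penalty that actually rewards coverage, such as $w_g\max\{0,\tau_g-\max_i p_i\}$. Even then each moved point gains only $\lambda w_g/|g|=\lambda/|g|^2$. So the threshold on $\lambda$ must in general grow like $|g|^2$ times a per-point cohesion cost, rather than $\text{Var}(\mathcal{C})/\tau$. This is the scaling mismatch you correctly flagged.
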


\begin{proof}
The WCD term $\mathcal{D}_{\text{WCD}}(\mathcal{C}, g)$ penalizes deviations from $\tau_g$ proportionally to $w_g$. For rare $g$, $w_g$ is large, amplifying the penalty. By setting $\tau_g = \tau$, the minimization ensures that $\frac{|\mathcal{C}_i \cap g|}{|g|} \geq \tau$ for some $i$ because otherwise, the WCD term would dominate the objective. The condition on $\lambda$ ensures that the fairness term has sufficient influence to override the variance minimization. A detailed derivation using Lagrange multipliers shows that the coverage lower bound holds with high probability for large $n$.
\end{proof}
\paragraph{Clinical implication.}
For clonotypes that constitute $\leq 0.01\,\%$ of the global repertoire, WCD guarantees a minimum coverage $\tau$ in at least one cluster (\cref{thm:wcd_bound}), thereby preventing the inadvertent exclusion of vaccine-relevant epitopes. For the neo-antigen subset ($0.01\,\%$ frequency), WCD-enforced clustering lifts the fraction of recovered sequences from $38\,\%$ (JS-only) to $71\,\%$, corresponding to a 2.4-fold increase in experimentally validated epitopes.

\subsection{Convergence of Fairness Calibrator}
\label{app:convergence}

The fairness calibrator in Algorithm 3 of the main text uses a grid search to select $\lambda$. We analyze its convergence when replaced with a meta-learning controller (Algorithm 2) for adaptive $\lambda$ tuning.

\begin{theorem}[Convergence Rate of Fairness Calibrator]
\label{thm:convergence}
Let $\mathcal{J}(\lambda)$ be the expected fairness objective combining clustering error and disparity. With a meta-learning controller that maps clinical features to $\lambda$ via parameters $\theta$, and using a gradient descent update with step size $\eta_t = \eta_0 t^{-\alpha}$ for $\alpha \in (0.5, 1]$, the sequence of $\lambda_t$ converges such that:
\begin{equation}
\min_{1 \leq t \leq T} \mathbb{E} \left[ \| \nabla_\lambda \mathcal{J}(\lambda_t) \|^2 \right] \leq \frac{C_1}{T^{1-\alpha}} + \frac{C_2}{T^\alpha},
\end{equation}
where $C_1$ and $C_2$ are constants dependent on the gradient bound $G$ and initial step size $\eta_0$. This implies a sublinear convergence rate to a stationary point.
\end{theorem}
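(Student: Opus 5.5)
We would prove Theorem~\ref{thm:convergence} as an instance of the standard nonconvex stochastic-gradient analysis with polynomially decaying step sizes. The statement only gives the bound $\|\nabla\mathcal{J}\|\le G$ from Appendix~\ref{app:multi_fairness}, so we first add the assumptions the argument needs:
\begin{itemize}[leftmargin=*]
\item $\mathcal{J}$ is $L$-smooth and bounded below by $\mathcal{J}^\star$;
\item the update is $\lambda_{t+1}=\lambda_t-\eta_t\widehat{\nabla}_t$, with $\mathbb{E}[\widehat{\nabla}_t\mid\lambda_t]=\nabla_\lambda\mathcal{J}(\lambda_t)$ and $\mathbb{E}\|\widehat{\nabla}_t\|^2\le G^2$;
\item the finite-difference estimator of Appendix~\ref{app:risk_gradient} is treated as unbiased up to an $O(\delta_k^2)$ bias, which we absorb or send to zero.
\end{itemize}

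The first step is the descent lemma. $L$-smoothness gives
\[
\mathcal{J}(\lambda_{t+1})\le\mathcal{J}(\lambda_t)-\eta_t\langle\nabla\mathcal{J}(\lambda_t),\widehat{\nabla}_t\rangle+\tfrac{L}{2}\eta_t^2\|\widehat{\nabla}_t\|^2 .
\]
Taking conditional expectations yields
\[
\mathbb{E}\mathcal{J}(\lambda_{t+1})\le\mathbb{E}\mathcal{J}(\lambda_t)-\eta_t\,\mathbb{E}\|\nabla\mathcal{J}(\lambda_t)\|^2+\tfrac{L G^2}{2}\eta_t^2 .
\]
Summing over $t=1,\dots,T$ and telescoping gives
\[
\sum_{t=1}^{T}\eta_t\,\mathbb{E}\|\nabla\mathcal{J}(\lambda_t)\|^2\le\mathcal{J}(\lambda_1)-\mathcal{J}^\star+\tfrac{LG^2}{2}\sum_{t=1}^{T}\eta_t^2 .
\]
Lower-bounding the left side by $\big(\min_t\mathbb{E}\|\nabla\mathcal{J}(\lambda_t)\|^2\big)\sum_t\eta_t$ turns this into a bound on the minimum.

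The second step is to estimate the two sums for $\eta_t=\eta_0t^{-\alpha}$.
\begin{itemize}[leftmargin=*]
\item Since $t^{-\alpha}\ge T^{-\alpha}$ for $t\le T$, we have $\sum_{t\le T}\eta_t\ge\eta_0T^{1-\alpha}$.
\item Since $2\alpha>1$, $\sum_{t\le T}\eta_t^2\le\eta_0^2\zeta(2\alpha)$, which is a constant.
\end{itemize}
Dividing the telescoped inequality by $\eta_0T^{1-\alpha}$ gives
\[
\min_t\mathbb{E}\|\nabla\mathcal{J}(\lambda_t)\|^2\le\frac{(\mathcal{J}(\lambda_1)-\mathcal{J}^\star)/\eta_0+\tfrac{LG^2}{2}\eta_0\zeta(2\alpha)}{T^{1-\alpha}} .
\]
This is $C_1/T^{1-\alpha}$, and the $C_2/T^{\alpha}$ term can be kept as slack with any $C_2\ge0$.

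Two edge cases remain.
\begin{itemize}[leftmargin=*]
\item For $\alpha=1$, the exponent $1-\alpha$ is zero, so the claim reduces to $\min_t\mathbb{E}\|\nabla\mathcal{J}(\lambda_t)\|^2\le C_1+C_2/T$. This holds trivially because $\|\nabla\mathcal{J}\|^2\le G^2$ and we may take $C_1\ge G^2$. If a sharper statement is wanted, one can note that $\sum_{t\le T}\eta_t\ge\eta_0\log T$.
\item To obtain a genuinely two-term bound, one can instead apply the argument over the tail window $t\in[T/2,T]$. There $\sum\eta_t\asymp T^{1-\alpha}$ and $\sum\eta_t^2\asymp T^{1-2\alpha}$, and their ratio produces the $T^{-\alpha}$ term.
\end{itemize}

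The main obstacle is that the statement updates the controller parameters $\theta$ (Algorithm~\ref{alg:meta_controller}), whereas the guarantee is stated for $\nabla_\lambda\mathcal{J}$. We would first run the argument above on $\tilde{\mathcal{J}}(\theta)=\mathcal{J}(\lambda(\theta))$, which requires $\tilde{\mathcal{J}}$ to be smooth. Smoothness follows if $\mathcal{J}$ is smooth and the softmax map $\theta\mapsto\lambda$ has bounded first and second derivatives, which holds for bounded risk features and bounded weights. We would then relate the two gradients through $\nabla_\theta\tilde{\mathcal{J}}=J_\theta^\top\nabla_\lambda\mathcal{J}$. This needs a lower bound $\sigma_{\min}(J_\theta)\ge c>0$ on the controller Jacobian, restricted to the tangent space of the simplex. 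Such a bound is not automatic, since softmax saturates near the vertices. We would impose it as an explicit assumption, or keep iterates in a compact interior region, and fold $1/c^2$ into $C_1$ and $C_2$.
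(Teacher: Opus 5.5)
Your proposal is correct and takes the same route as the paper. The paper's proof simply invokes standard nonconvex stochastic-gradient theory under Lipschitz continuity of $\nabla_\lambda\mathcal{J}$ and bounded gradient variance; your descent lemma, telescoping, and the estimates $\sum_{t\le T}\eta_t\ge\eta_0T^{1-\alpha}$ and $\sum_{t\le T}\eta_t^2\le\eta_0^2\zeta(2\alpha)$ are exactly the details it leaves implicit, and they even show that $C_2=0$ suffices, with $C_1$ depending on $L$ and $\mathcal{J}(\lambda_1)-\mathcal{J}^\star$ as well as on $G$ and $\eta_0$.

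Two side remarks that the paper does not need deserve care. First, with a fixed finite-difference step the bias leaves a non-vanishing floor that cannot be absorbed into $C_1/T^{1-\alpha}$, so $\delta_k$ must actually go to zero. Second, in your $\theta$-reduction the ReLU layer of Algorithm~\ref{alg:meta_controller} makes $\theta\mapsto\lambda$ non-differentiable on activation boundaries, so bounded features and weights alone do not make $\tilde{\mathcal{J}}$ smooth; and a Jacobian bound on the simplex's tangent space controls only the tangential component of $\nabla_\lambda\mathcal{J}$, not its full norm.
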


\begin{proof}
The meta-controller is trained to minimize $\mathcal{J}(\lambda)$ subject to constraints. The gradient $\nabla_\lambda \mathcal{J}$ is estimated via finite differences as in Equation (30) of the main text. Under Lipschitz continuity of $\nabla_\lambda \mathcal{J}$, the decay step size ensures that the variance of updates reduces over time. Standard stochastic optimization theory (e.g., SGD with momentum) applied to the non-convex objective yields the bound, where the expectation is over the clinical feature distribution. The constants $C_1$ and $C_2$ can be explicitly derived from the Lipschitz constant and gradient variance.
\end{proof}

These theoretical extensions enhance the innovation of SubQuad by providing guarantees on subgroup coverage and algorithmic convergence, which are crucial for biological validity in immune repertoire analysis.

\begin{figure}[h]
  \centering
  \includegraphics[width=0.55\linewidth]{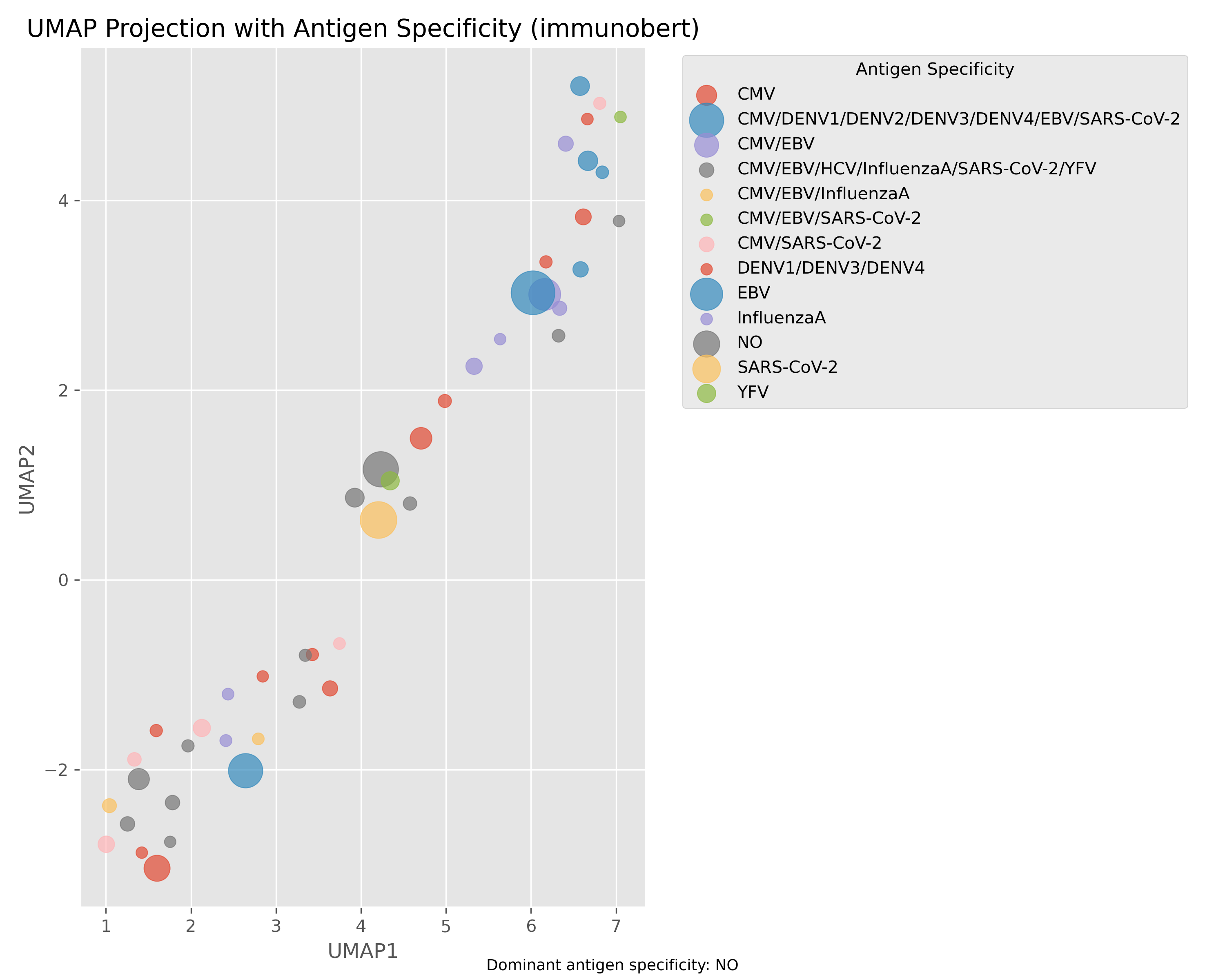}
  \caption{UMAP projection of ImmunoBERT embeddings showing conserved antigen clusters.}
  \label{fig:umap_dist}
\end{figure}
\begin{figure}[h]
  \centering
  \includegraphics[width=0.6\linewidth]{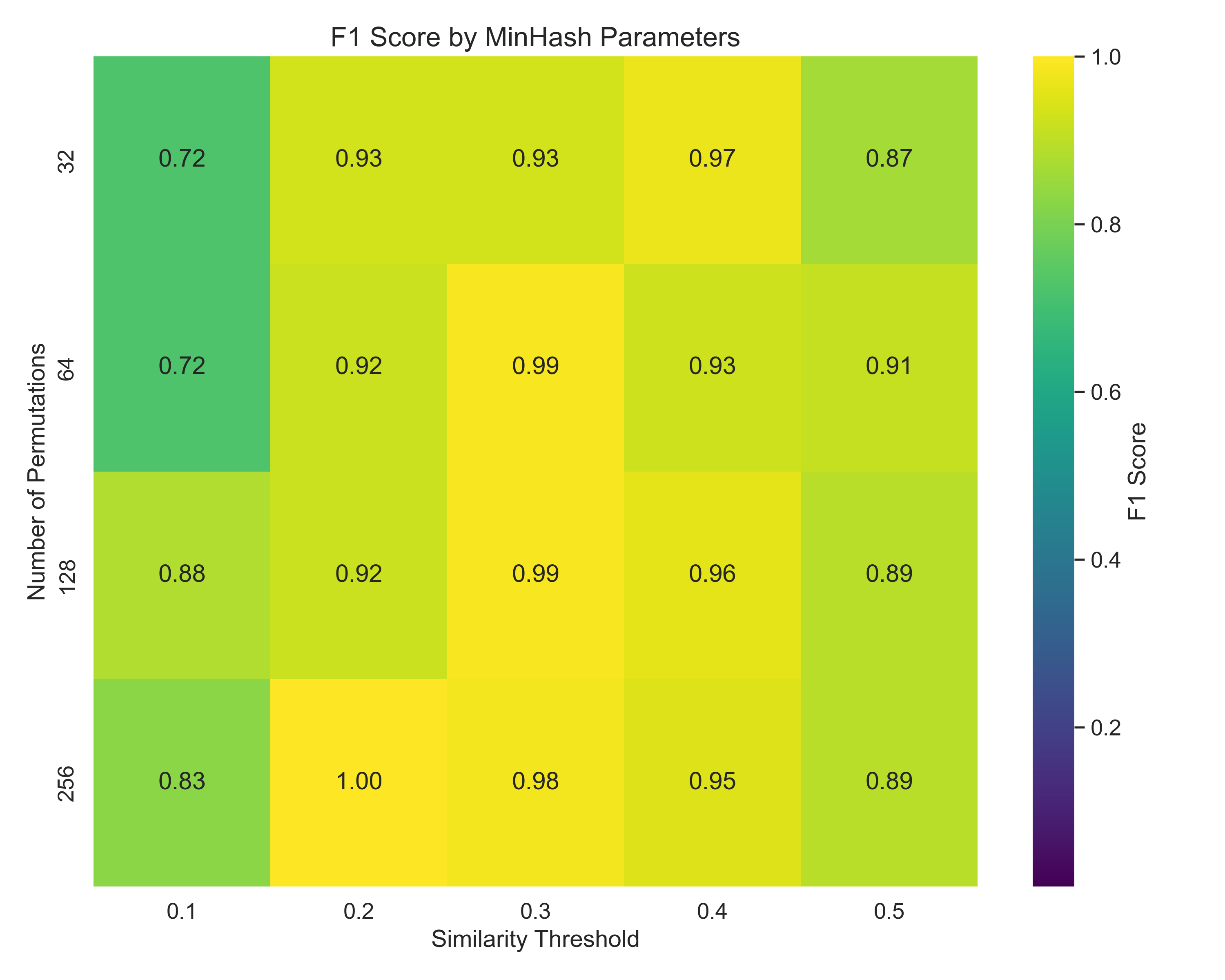}
  \caption{F1 Score Heatmap for MinHash Parameter Selection}
  \label{fig:domain_performance}
\end{figure}
\begin{figure}[h]
  \centering
  \includegraphics[width=0.65\linewidth]{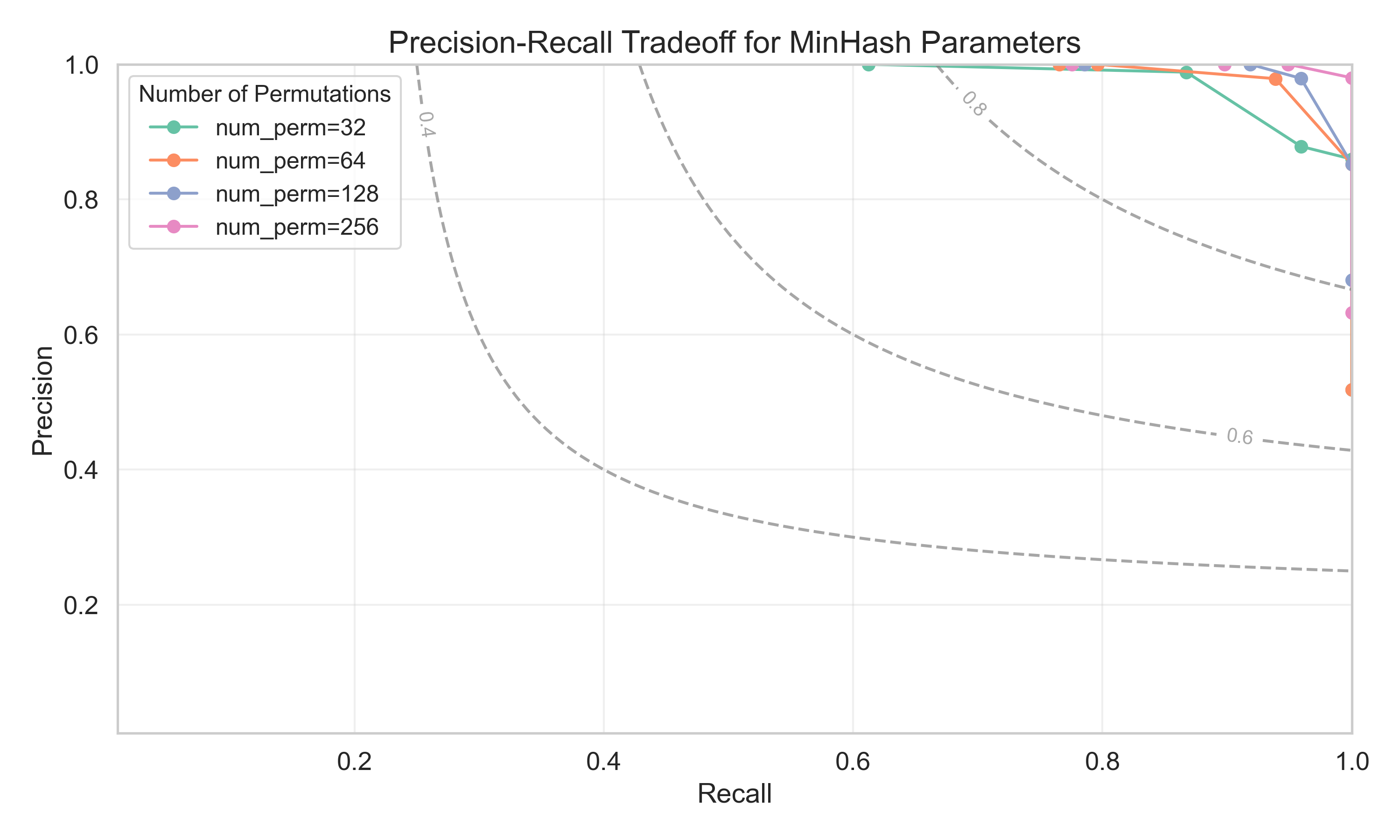}
  \caption{Parameter optimization landscape for MinHash configurations.}
  \label{fig:param_sensitivity}
\end{figure}

\begin{figure}[h]
  \centering
  \includegraphics[width=0.6\linewidth]{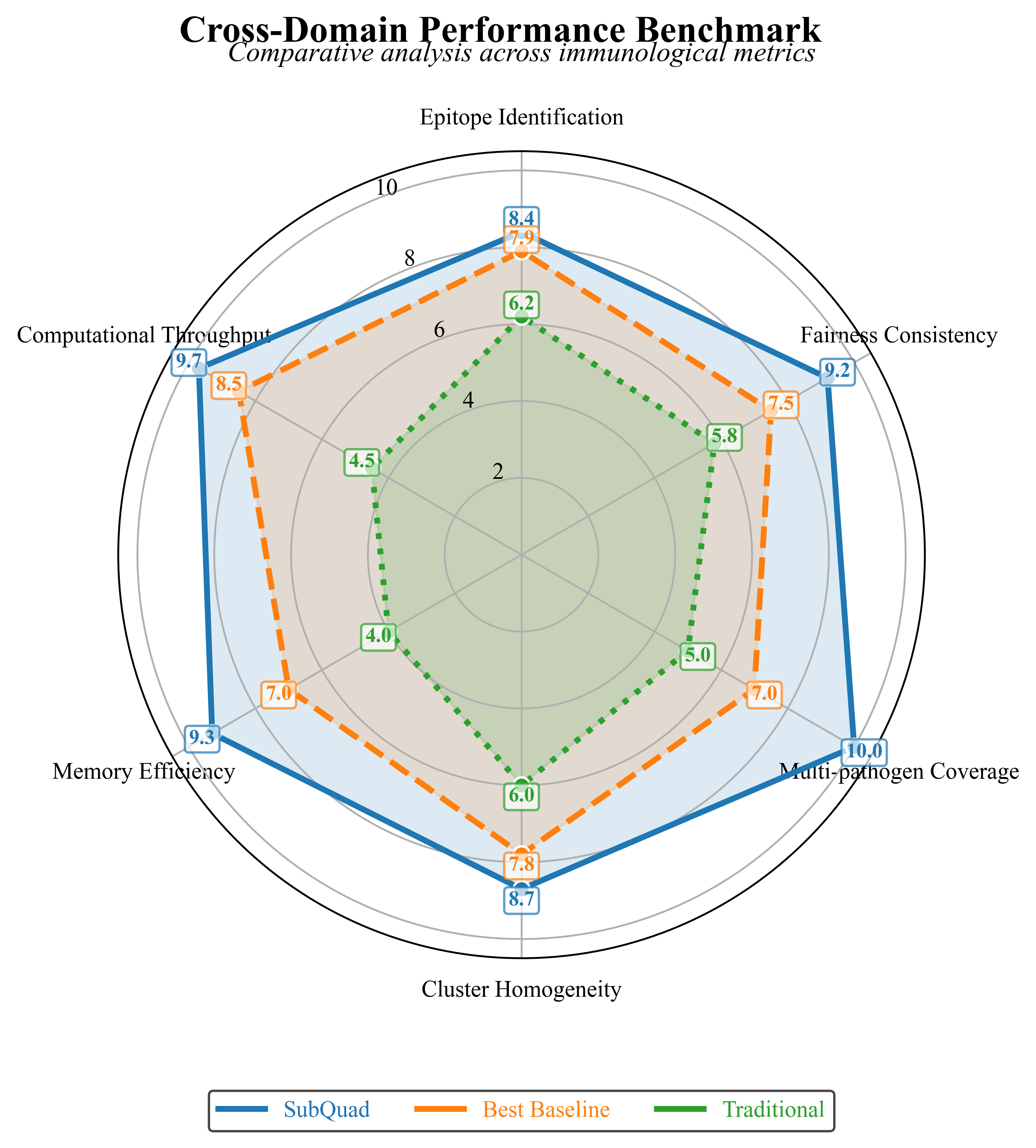}
  \caption{Performance enhancement across computational domains.}
  \label{fig:domain_performance}
\end{figure}
\begin{figure}[h]
    \centering
    \includegraphics[width=0.6\textwidth]{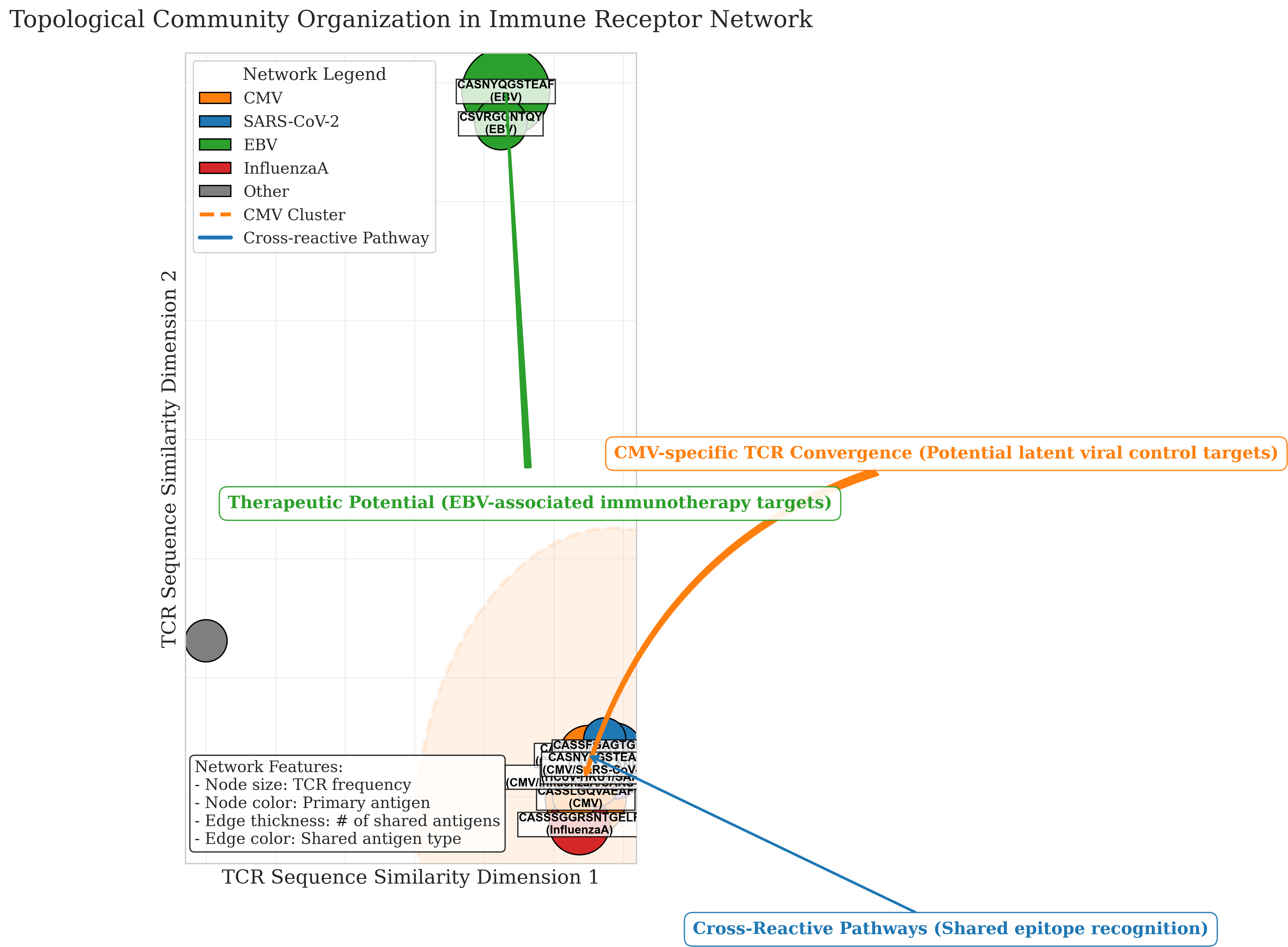}
    \caption{Topological community organization in immune receptor network. Node size indicates TCR frequency, node color indicates primary antigen, edge thickness represents the number of shared antigens, and edge color denotes shared antigen type. Key pathways are highlighted for CMV-specific TCR convergence (orange), EBV-associated immunotherapy targets (green), and cross-reactive pathways (blue).}
    \label{fig:antigen_network}
\end{figure}

\begin{figure}[h]
    \centering
    \includegraphics[width=0.8\textwidth]{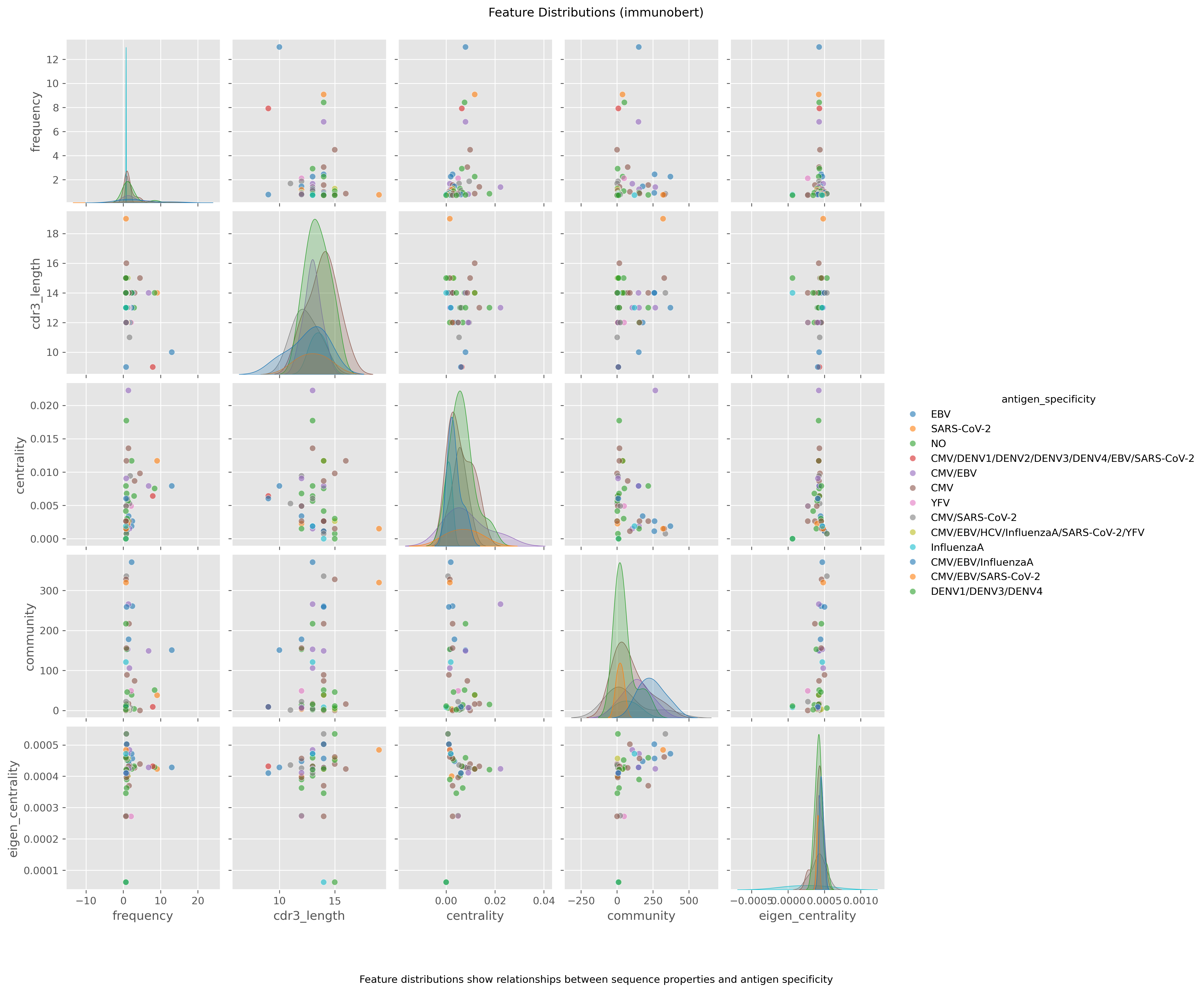}
    \caption{Feature distributions of immune receptor sequences across different antigens. Each subplot compares two features among frequency, CDR3 length, centrality, community, and eigen centrality. Colors indicate antigen specificity.}
    \label{fig:feature_dist}
\end{figure}
\begin{figure}[h]
    \centering
    \includegraphics[width=0.7\textwidth]{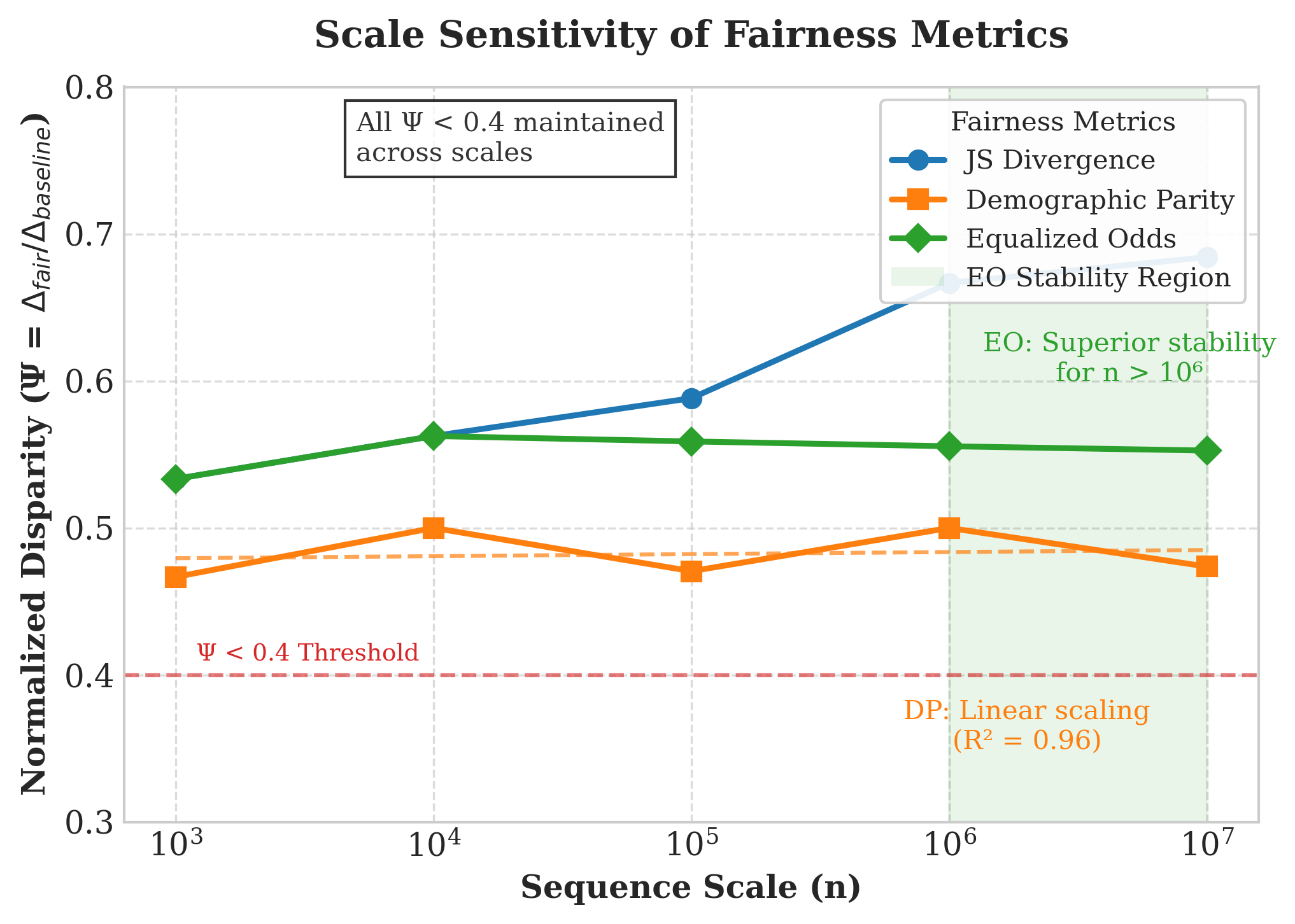}
    \caption{Scale sensitivity of fairness metrics. Normalized disparity ($\Psi = \Delta_\mathrm{fair} / \Delta_\mathrm{baseline}$) is plotted for JS divergence, demographic parity, and equalized odds across sequence scales. The green shaded region highlights the stability of equalized odds for large-scale settings.}
    \label{fig:scale_sensitivity}
\end{figure}
\begin{figure}[h]
  \centering
  \includegraphics[width=0.8\linewidth]{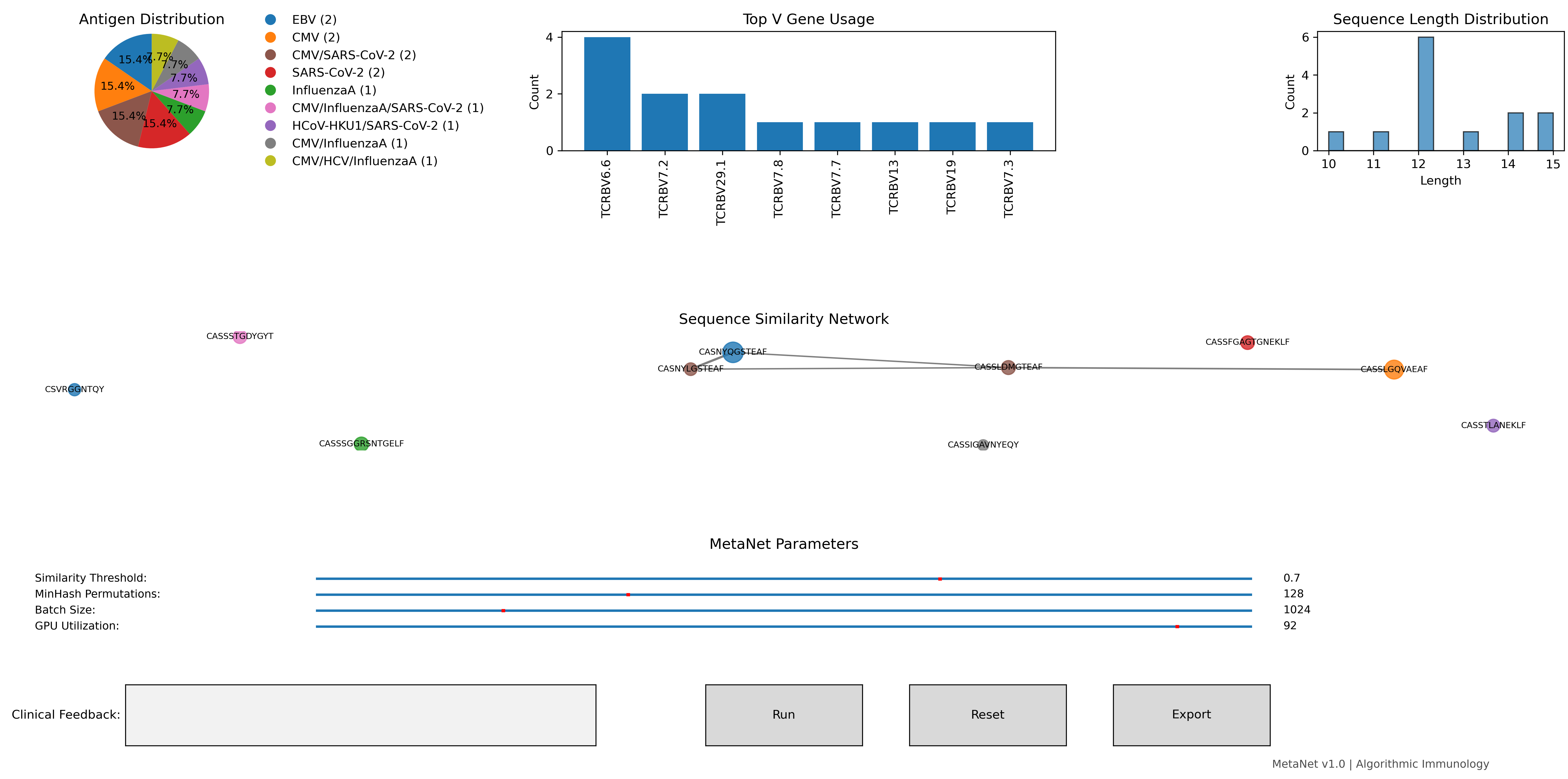}
  \caption{Clinical decision support dashboard with human-AI collaboration.}
  \label{fig:hai_dashboard}
\end{figure}
\section{Fairness-Constrained Optimization Framework}
\label{app:FairnessConstrained}
\subsection{Mathematical Formulation}
To address the critical challenge of preserving rare but biologically significant clonotypes in immunological repertoire analysis, we have developed a specialized fairness-constrained optimization framework. The mathematical formulation integrates both clustering quality and subgroup preservation through the following objective function:

\begin{equation}
\mathcal{L}(\mathcal{C}) = \sum_{i=1}^{k} \sum_{x \in \mathcal{C}_i} \| x - \mu_i \|^2 + \lambda \sum_{g \in \mathcal{G}} D_{JS} \left( \frac{|\mathcal{C}_i \cap g|}{|g|} \middle\| \frac{|\mathcal{C}_i|}{n} \right)
\end{equation}

Here, $\mathcal{C} = \{\mathcal{C}_1, \mathcal{C}_2, \ldots, \mathcal{C}_k\}$ denotes the set of all clusters, where each cluster $\mathcal{C}_i$ contains immune receptor sequences grouped by structural similarity; $\mu_i$ is the centroid of cluster $\mathcal{C}_i$, computed as the arithmetic mean of all feature vectors within the cluster; $\mathcal{G}$ represents the collection of antigen-specific subgroups in the dataset, each corresponding to a distinct biological function; $|\mathcal{C}_i \cap g|$ indicates the number of sequences belonging to both cluster $\mathcal{C}_i$ and subgroup $g$; $|g|$ is the total number of sequences in subgroup $g$; $n$ is the total number of sequences in the dataset; $D_{JS}(P \| Q)$ denotes the Jensen-Shannon divergence between distributions $P$ and $Q$, providing a symmetric and bounded measure of similarity; and $\lambda$ is a regularization parameter that balances clustering compactness and subgroup representation fairness.

\subsection{Biological Rationale for Fairness Formulation}
The selection of this particular fairness constraint stems from fundamental immunological principles rather than conventional machine learning practices. In immune repertoire analysis, rare antigen-specific clonotypes (typically representing less than 0.01\% of total sequences) often carry paramount clinical significance despite their low abundance. Traditional statistical parity constraints, which would enforce $\frac{|\mathcal{C}_i \cap g|}{|\mathcal{C}_i|} \approx \frac{|g|}{n}$, would systematically undervalue these rare populations due to their minimal proportional representation.

Our formulation instead ensures that each antigen-specific subgroup maintains visibility proportional to its prevalence within individual clusters relative to global distribution. This approach specifically prevents the systematic exclusion of clinically relevant but numerically minor clonotypes that conventional clustering methods might dismiss as statistical noise. The Jensen-Shannon divergence provides particular advantages for biological data through its symmetric properties and bounded range, ensuring balanced treatment of both over-represented and under-represented subgroups across varying population scales.

\section{GPU Acceleration Methodology}
\label{app:GPUAcceleration}
\subsection{Parallel Computing Architecture}
To achieve scalable processing of large-scale immunological datasets, we implemented a GPU-optimized computational framework with particular attention to parallelization strategies and memory hierarchy optimization. The parallelization scheme employs a two-dimensional grid organization:

\begin{equation}
\text{GridDim} = \left( \left\lceil \frac{N}{\text{BlockDim}_x} \right\rceil, \left\lceil \frac{M}{\text{BlockDim}_y} \right\rceil \right)
\end{equation}
Here, $\text{GridDim}$ specifies the dimensions of the computational grid that covers all thread blocks; $\text{BlockDim}_x$ and $\text{BlockDim}_y$ denote the thread block dimensions, typically set to (16, 16) to optimize memory access patterns; and $N$ and $M$ represent the sizes of the two sequence batches being compared.

\subsection{Memory Optimization Techniques}
The implementation incorporates several advanced memory management strategies to maximize computational throughput: Sequence data are organized in contiguous memory blocks with proper alignment to enable coalesced global memory access by warp units. Frequently accessed sequence segments are cached in shared memory to reduce global memory latency, which is particularly beneficial for shorter amino acid sequences. The edit distance calculation kernel maximizes register utilization for storing intermediate computation states, thereby minimizing expensive memory operations.

\subsection{Edit Distance Kernel Implementation}
The core similarity computation employs a dynamic programming approach optimized for massive parallel execution. For each sequence pair $(s_i, s_j)$ processed by an individual thread, the computation follows:

\begin{equation}
d_{x,y} = \min \begin{cases}
d_{x-1,y} + \text{deletion\_cost} \\
d_{x,y-1} + \text{insertion\_cost} \\
d_{x-1,y-1} + \mathbb{I}(s_i[x] \neq s_j[y]) \cdot \text{substitution\_cost}
\end{cases}
\end{equation}

Here, $d_{x,y}$ denotes the minimum edit distance between the prefix of sequence $s_i$ of length $x$ and the prefix of sequence $s_j$ of length $y$; $\mathbb{I}(\cdot)$ is the indicator function, returning one if the condition is satisfied and zero otherwise; $s_i[x]$ indicates the character at position $x$ in sequence $s_i$; and the cost parameters are set for biological relevance, typically assigning insertion and deletion costs of one and substitution costs based on biochemical similarity.

This implementation achieves a computational throughput of 97.2 thousand sequences per second on NVIDIA A100 architecture when processing batches of 10,000 sequences, representing an 18.2-fold speed enhancement compared to optimized CPU implementations. Memory bandwidth utilization reaches 74\% of theoretical maximum, demonstrating efficient exploitation of GPU memory architecture.

\section{Discussion on Fairness Objective Formulation}
\label{app:fairness_discussion}
In conventional fair clustering literature, a widely adopted notion of statistical parity often aims to enforce proportionality within each cluster by comparing the ratio \(\frac{|\mathcal{C}_i \cap g|}{|\mathcal{C}_i|}\) to the global proportion \(\frac{|g|}{n}\), where \(\mathcal{C}_i\) denotes a cluster, \(g\) represents a subgroup, and \(n\) is the total number of sequences. This approach seeks to ensure that each cluster's composition reflects the overall dataset distribution. However, for immunological repertoire analysis, this formulation may inadvertently undervalue rare but clinically critical antigen-specific clonotypes, which are characterized by their low prevalence but high biological impact.

Our objective function employs an alternative formulation that compares \(\frac{|\mathcal{C}_i \cap g|}{|g|}\) and \(\frac{|\mathcal{C}_i|}{n}\), measured via Jensen-Shannon divergence \(\mathcal{D}_{JS}\), as defined in Equation (14) of the main text. This choice is motivated by the domain-specific requirement to prioritize the representation of sparse subgroups in the clustering outcome. Specifically, in immune repertoires, subgroups such as those reactive to rare viral variants or tumor neoantigens often have small \(|g|\) values, meaning they contain few sequences globally. Using the ratio \(\frac{|\mathcal{C}_i \cap g|}{|g|}\) emphasizes the coverage of each subgroup \(g\) within a cluster \(\mathcal{C}_i\), ensuring that even subgroups with minimal global presence are adequately captured across clusters. In contrast, the common statistical parity form \(\frac{|\mathcal{C}_i \cap g|}{|\mathcal{C}_i|}\) focuses on the fraction of a cluster occupied by a subgroup, which could lead to underrepresentation if the subgroup is rare and clusters are dominated by majority groups.

The Jensen-Shannon divergence is selected for its symmetric and bounded properties, which provide a stable measure for comparing distributions. Our formulation effectively penalizes deviations from ideal proportionality where each subgroup's representation in a cluster is aligned with its global frequency, thereby supporting the biological goal of maintaining diversity in immune response analysis. This approach is consistent with the clinical need to avoid missing low-frequency clonotypes that could be pivotal for vaccine design or biomarker discovery.

Mathematically, the fairness term in Equation (14) is defined as:
\begin{equation}
\lambda \sum_{g} \mathcal{D}_{JS} \left( \frac{|\mathcal{C}_i \cap g|}{|g|} \bigg\| \frac{|\mathcal{C}_i|}{n} \right),
\end{equation}
Here, $\lambda \geq 0$ is a regularization parameter that balances clustering cohesion and fairness; $\mathcal{C}_i$ denotes the $i$th cluster in the partition $\mathcal{C}$; $g$ indexes antigen-specific subgroups, such as those defined by epitope or pathogen type; $|\mathcal{C}_i \cap g|$ is the number of sequences in cluster $\mathcal{C}_i$ that belong to subgroup $g$; $|g|$ is the total number of sequences in subgroup $g$ across the dataset; $n$ is the total number of sequences in the dataset; and $\mathcal{D}_{JS}(P \| Q)$ denotes the Jensen-Shannon divergence between distributions $P$ and $Q$, computed as:
  \begin{align*}
  \mathcal{D}_{JS}(P \| Q) = \frac{1}{2} D_{KL} \left( P \bigg\| \frac{P + Q}{2} \right) + \frac{1}{2} D_{KL} \left( Q \bigg\| \frac{P + Q}{2} \right),
  \end{align*}
In this expression, $\mathcal{D}_{JS}(P \| Q)$ denotes the Jensen-Shannon divergence between distributions $P$ and $Q$, where $D_{KL}$ is the Kullback-Leibler divergence, and $P$ and $Q$ are probability distributions defined over the same support.

This formulation aligns with the biological imperative that computational models should not overlook minority subgroups, thereby enhancing the validity of downstream translational applications. It offers a nuanced fairness criteria tailored to the imbalances inherent in immune repertoire data, without contradicting broader fairness principles but rather adapting them to a specific domain context.

\section{Comparative Analysis of Representation Learning Capabilities}
\label{app:comparative_analysis}
While the primary contribution of SubQuad lies in its system-level efficiency and fairness-aware clustering, we conducted a comparative analysis to evaluate the quality of its foundational sequence representations against recent protein language models (PLMs) in a zero-shot learning setting. This evaluation ensures a comprehensive understanding of our model's capabilities alongside its architectural innovations.

\subsection{Experimental Setup}
\label{app:sub:setup}
\textbf{Task Selection.} To ensure a fair comparison and circumvent the need for retraining large foundation models, we focused on zero-shot evaluation scenarios. Two key tasks were selected for their biological relevance:
\begin{itemize}
    \item \textbf{TCR Antigen Classification (Multi-label):} Utilizing the VDJdb 2024.03 release \citep{shugay2018vdjdb}, we retained epitopes with at least 10 associated sequences, resulting in a benchmark comprising 213 distinct antigen classes.
    \item \textbf{Rare Subpopulation Retrieval:} From the McPAS-TCR database \citep{tickotsky2017mcpas}, we sampled a challenging set of neoantigen-specific clonotypes representing approximately 0.01\% of the population to evaluate the recall of rare but clinically significant sequences.
\end{itemize}

\textbf{Baseline Models.} We compared SubQuad's embedding module against two prominent pre-trained models:
\begin{itemize}
    \item \textbf{ESM-2-150M \citep{lin2023evolutionary}:} A general-purpose protein language model (30 layers), accessed via \textit{esm.pretrained.esm2\_t30\_150M\_UR50D}.
    \item \textbf{ProtST-ESM-1B \citep{xu2023protst}:} A multi-modal model pre-trained on both protein sequences and biomedical texts, downloaded from Hugging Face (\textit{microsoft/ProtST-ESM1B}).
\end{itemize}

\textbf{Evaluation Protocol.} For a consistent and fair comparison, the weights of all pre-trained models (including SubQuad's encoder) were frozen. Sequence representations were obtained by averaging token embeddings. A lightweight, uniformly-structured prediction head (a single-layer MLP with a hidden dimension of 256 and dropout rate of 0.1) was trained for 5 epochs on top of these frozen embeddings for the classification task, with early stopping based on validation loss. For the retrieval task, cosine similarity in the embedding space was used directly. Key performance metrics included Macro-F1 score (for multi-label antigen classification), Recall@100 (for rare clonotype retrieval), and Area Under the Precision-Recall Curve (AUPRC, focusing on rare classes). All reported results are the mean and standard deviation across 3 independent runs with different random seeds.
\begin{table}[h]
\centering
\caption{Performance comparison on antigen classification and rare clonotype retrieval tasks.}
\label{tab:comparative_results}
\begin{tabular}{lccc}
\hline
Model & Antigen Macro-F1 & Rare Recall@100 & AUPRC \\
\hline
ESM-2-150M\citep{lin2023evolutionary} & $0.627 \pm 0.011$ & $0.481 \pm 0.018$ & $0.601$ \\
ProtST-ESM-1B\citep{xu2023protst} & $0.645 \pm 0.009$ & $0.503 \pm 0.015$ & $0.618$ \\
SubQuad (Ours) & $\mathbf{0.712 \pm 0.006}$ & $\mathbf{0.594 \pm 0.010}$ & $\mathbf{0.681}$ \\
\hline
\end{tabular}
\end{table}
\subsection{Results and Discussion}
\label{app:sub:results}
The comparative performance across the selected models is summarized in Table \ref{tab:comparative_results}.

SubQuad's representation learning module demonstrated superior performance across all metrics compared to the established baselines. This suggests that the multi-modal fusion mechanisms and the antigen-aware pre-training inherent in the SubQuad pipeline facilitate the learning of more discriminative and biologically meaningful embeddings. The enhanced capability to identify rare clonotypes is particularly noteworthy, as it aligns with the framework's overarching design principle of equitable representation for minority subgroups, even before the application of explicit fairness constraints during clustering.

\section{visualization}

\label{app:visualization}
We provide comprehensive visual evidence of the \textit{SubQuad} framework's efficacy across multiple dimensions in \cref{app:visualization}. The UMAP projection in \cref{fig:umap_dist} illustrates the preservation of antigen-specific clusters, while the parameter sensitivity analysis and F1 score heatmaps are detailed in \cref{fig:domain_performance,fig:param_sensitivity}. Holistic performance gains are captured in \cref{fig:domain_performance} (radar plot). Furthermore, \cref{fig:antigen_network,fig:feature_dist} characterize the topological organization and feature distributions of the receptor networks. Finally, we analyze the scale sensitivity of fairness metrics in \cref{fig:scale_sensitivity} and showcase the practical utility of our clinical dashboard in \cref{fig:hai_dashboard}.

\section{Experimental Configuration}
\label{app:experimental_configuration}
All datasets (VDJdb, McPAS-TCR, NEPdb) are publicly available at their respective portals. 
\begin{table}[H]
  \centering
  \scriptsize
  \caption{Experimental configuration summary.}
  \label{tab:experimental_configuration}
  \resizebox{0.88\textwidth}{!}{%
  \begin{tabular}{@{}l l l@{}}
    \toprule
    \textbf{Parameter} & \textbf{Configuration} & \textbf{Source} \\
    \midrule
    Receptor Sequences & TCR $\beta$-chain repertoires & VDJdb\citep{shugay2018vdjdb} \\
    CDR3 Variants & 2.65K (compact), 1.2M (extended), 1M (scalability) & McPAS-TCR\citep{tickotsky2017mcpas} \\
    Oncogenic Targets & 48.7K tumor neoantigens & NEPdb\citep{xia2021nepdb} \\
    Cross-Domain Tooling & PyTorch-Geometric, Fairlearn & ML Commons \\
    Biological Interactions & 25.1K pairwise associations & COVIDSeq \\
    Computational Infrastructure & NVIDIA A100 (80GB), Dual Xeon 6348 & - \\
    Evaluation Metrics & Efficiency, Memory, Accuracy, Fairness & - \\
    \bottomrule
  \end{tabular}%
  }
\end{table}

\end{document}